\newcommand{\states}{\mathcal{S}}
\newcommand{\cX}{\mathcal{X}}
\newcommand{\cY}{\mathcal{Y}}
\newcommand{\cU}{\mathcal{U}}
\DeclareMathOperator*{\argmin}{argmin}
\DeclareMathOperator*{\E}{\mathbb{E}}
\newcommand{\lap}{\text{Lap}}
\newcommand{\R}{\mathbb{R}}
\newcommand{\G}{\mathcal{G}}
\newcommand{\Lag}{\mathcal{L}}
\newcommand{\cG}{\mathcal{G}}
\newcommand{\cS}{\mathcal{S}}
\providecommand{\keywords}[1]
{
  \small	
  \textbf{\textit{Keywords---}} #1
}
\theoremstyle{plain}
\newtheorem{theorem}{Theorem}[section]
\newtheorem{lemma}[theorem]{Lemma}
\newtheorem{corollary}[theorem]{Corollary}
\theoremstyle{definition}
\newtheorem{definition}[theorem]{Definition}
\theoremstyle{remark}
\begin{document}

\title{Intersectional Fairness in Reinforcement Learning with Large State and Constraint Spaces}
\iffalse
\author[1]{Eric Eaton}
\author[1]{Marcel Hussing}
\author[1]{Michael Kearns}
\author[1]{Aaron Roth}
\author[1]{Sikata Bela Sengupta}
\author[2]{Jessica Sorrell}
\fi
\author{
Eric Eaton\thanks{Department of Computer and Information Sciences, University of Pennsylvania.} \and
Marcel Hussing \footnotemark[1] \and 
Michael Kearns \footnotemark[1]\and 
Aaron Roth \footnotemark[1] \and
Sikata Sengupta \footnotemark[1]\and
Jessica Sorrell\thanks{Department of Computer Science, Johns Hopkins University.}}

%\affil[1]{Department of Computer and Information Science, University of Pennsylvania, Philadelphia, USA}

%\affil[2]{Department of Computer Science, Johns Hopkins University, Baltimore, Maryland, USA}

\maketitle

\begin{abstract}
In traditional reinforcement learning (RL), the learner aims to solve a single objective optimization problem: find the policy that maximizes expected reward. However, in many real-world settings, it is important to optimize over multiple objectives simultaneously. For example, when we are interested in fairness, states might have feature annotations corresponding to multiple (intersecting) demographic groups to whom reward accrues, and our goal might be to maximize the reward of the group receiving the minimal reward.   In this work, we consider a multi-objective optimization problem in which each objective is defined by a state-based reweighting of a single scalar reward function. This generalizes the problem of maximizing the reward of the minimum reward group. We provide oracle-efficient algorithms to solve these multi-objective RL problems even when the number of objectives is exponentially large --- for tabular MDPs, as well as for large MDPs when the group functions have additional structure. %The algorithms we present solve this class of multi-objective RL problems with a possibly \textit{exponentially} large class of constraints over intersecting groups in both tabular and large state space MDPs in an oracle-efficient manner. 
Finally, we experimentally validate our theoretical results and demonstrate applications on a preferential attachment graph MDP. 
\end{abstract}

\keywords{Machine Learning, Reinforcement Learning, Fairness, Multi-Group, Minimax}
\section{Introduction}

There are a number of reinforcement learning (RL) settings in which states correspond to either \emph{individuals} or \emph{groups}, each with their own properties,  and actions correspond to the provision of goods or services among the states \cite{wen2021algorithms,satija2023group}. For example, we could consider a road network of neighborhoods and the sequential distribution of disaster relief over the region \cite{li2024reinforcement}, a set of critical locations and touring these locations to provide surveillance, or hospital resources (e.g., beds, ventilators) and their allocation.

In scenarios like these, what should we optimize for? If we optimize for a single objective, then the benefits of the actions may accrue disproportionately to some groups of people or locations over others. If there are natural groupings of states whose welfare we are concerned about (such as neighborhood demographics or disease subpopulations), then a natural fairness-motivated objective is \emph{minimax} optimization: to maximize the reward that accrues to the group obtaining the minimum reward. Minimax objectives over groups have recently been studied in classification problems \cite{martinez2020minimax,diana2021minimax}, and have the attractive property that they \emph{Pareto-dominate} solutions that seek to \emph{equalize} reward across groups; every group will have higher reward in a minimax solution than they would in any equal reward solution. Problems like this are examples of \emph{RL problems with constraints} \cite{calvo2023state,dudik2020oracle}.

However, as noted by \citet{kearns2018preventing} (in a classification setting), especially in fairness motivated problems, it is insufficient to satisfy these kinds of constraints marginally. For example, suppose we have demographic groups defined by race, gender, and income, which are not mutually exclusive in that no attribute determines the value of any other. Satisfying constraints marginally on, e.g., racial groups does not guarantee that the corresponding constraints are satisfied on intersectional groups, defined by race and gender or gender and income. In general, if we have $d$ groupings of individuals, there are $2^d$ intersectional groups --- an enormous number for even modestly large values of $d$. 

In this paper, we study a constrained RL problem in the episodic setting that generalizes the problem of minimax group fairness, and is able to efficiently handle exponentially many constraints --- for example, those that arise from considering all intersections of $d$ grouping functions. We consider a problem in which the states $s \in \cS$ of an MDP are annotated by feature vectors (e.g., describing the demographics of a population corresponding to state $s$). There is a collection of \emph{group functions} $\cG$ containing functions $g:\cS\rightarrow \{0,1\}$ indicating (as a function of the features at a state $s$) whether $s$ is a member of group $g$ or not. The groups can be arbitrarily intersecting. Every state/action pair $(s,a)$ is associated with a reward $r(s,a)$ that accrues to each group $g$ such that $g(s) = 1$. Given a policy $\pi$, the expected reward that accrues to group $g$ is $V^g(\pi)=\E_\pi[\sum_{t=1}^H r(s_t,a_t)g(s_t)]$, and the group-wise minimax reward maximization problem is to solve $\max_{\pi}\min_{g} V^g(\pi)$. Our goal is to give oracle-efficient algorithms for solving this problem over large state and constraint spaces --- i.e., computational reductions to \emph{unconstrained} RL problems and optimization problems over $\cG$.

\subsection{Our Results}
We give three main results:
\begin{enumerate}
\item  For MDPs with only polynomially many states --- i.e., the \emph{tabular} setting --- we give an efficient reduction to a linear optimization oracle over $\cG$ that optimally solves the constrained RL problem with constraints defined by $\cG$, independently of the cardinality of $\cG$. Here the oracle we need is for a batch/offline optimization problem, which is substantially easier than the RL/control problem. See Section \ref{sect:tabular}.
\item For large MDPs, (which we cannot solve efficiently in the worst case, even in the standard single-objective setting), we give an efficient reduction from the problem of optimally solving the contrained RL problem with constraints from $\cG$ to two problems: 1) the problem of finding the optimal policy in the standard (unconstrained) single-objective RL problem, and 2) the batch/offline optimization problem over $\cG$. Here we need to assume that $\cG$ has special structure --- namely that it has a polynomial-sized \emph{separator set} \citep{dudik2020oracle} --- but fortunately the class of boolean conjunctions that define the $2^d$ ``intersectional'' constraints given $d$ groupings of the data have this structure. See Section \ref{sect:gftpl}.
\item For arbitrary grouping functions $\cG$ (which may not have small separator sets), we give an algorithm that makes iterative calls to a standard RL algorithm for the unconstrained problem and an optimization oracle over $\cG$ that provably converges to the optimal constrained policy --- albeit without a polynomial time convergence bound. See Section \ref{sect:fairfict}. Despite the lack of a polynomial-time convergence bound (in contrast to the algorithm we give in Section \ref{sect:gftpl}), this algorithm has the advantage of simplicity (and not  requiring separator set structure on $\cG$), and is what we use in experiments; we find that empirically it works quite well.
\end{enumerate}

In the above results, what we actually find is a \textit{distribution} over policies that are minimax-optimal in expectation. But, this naively means that for any fixed policy in the support of our distribution, the constraints might be badly violated. This would be concerning in settings in which we care about a single episode rather than the average over many episodes and has a simple fix, described in \textbf{Appendix \ref{app:err_canc}}. Informally, we replace constraints of the form $g(x) \leq 0$ with constraints of the form $\max\{0,g(x)\} \leq 0$ which preserves convexity and eliminates the possibility that ``slack'' in the constraints on some policies cancel out in expectation with constraint violation in others. This is because by taking only the positive part of the constraint violation, we penalize our solution for constraint violation without rewarding it for slack.

We evaluate FairFictRL (Algorithm \ref{alg:FairFictRL}) on Barabási-Albert graphs \citep{barabasi2016network} with groups assigned based on the degree distribution of nodes. We show that our algorithm converges efficiently to a solution with low average constraint violations for all groups, while still optimizing the global objective. See Section~\ref{sect:experiments}.

\subsection{Related Work}

\subsubsection{Fair Reinforcement Learning}
There are many notions of fairness in RL that are distinct from what we study in this paper. \citet{jabbari2017fairness} define a notion of fairness that requires that the algorithm never play one action with higher probability over another unless the long-term reward of the optimal policy after playing the first action is higher than the second. 
\citet{cousinswelfare} consider a welfare-centric notion of fair RL that encompasses a broad class of functions over a set of beneficiaries. They develop a model of adversarially-fair KWIK (knows-what-it-knows learning) and provide the algorithm $E^4$ (equitable $E^3$). It is important to note that their results mainly hold only for the tabular MDP setting.
\citet{michailidis2024scalable} use a flexible form of Lorenz dominance to ensure a more equitable distribution of rewards and empirically demonstrate success of their method for real-world transport planning problems.
\citet{wen2021algorithms} consider model-based and model-free approaches to constrained fair RL problems and study these in the context of a bank offering loans to individuals. However, these methods do not extend for a very large number of overlapping groups.
\citet{satija2023group} consider demographic fairness by constraining the group pair-wise difference in performances and provide theoretical guarantees for their algorithms in the tabular setting with an empirical demonstration of success for larger state spaces.
See \citet{reuel2024fairness} for a survey of this literature. Aside from our differing objective, we differentiate from this literature by giving algorithms and theory that are able to handle both 1) a very large number of intersecting groups, and 2) provable guarantees beyond the tabular setting.

\subsubsection{Constrained Reinforcement Learning}
\citet{calvo2023state} consider the problem of constrained RL, especially for continuous state and action spaces. In their setting, they have $m$ reward functions and optimize global cumulative reward subject to each of the $m$ value functions for the given policy being at least as large as some threshold. To do this, they constructed augmented state-space MDPs and formulate the corresponding Lagrangian (with regularization) of this optimization problem. They provide an efficient Primal-Dual algorithm (using different scalarized reward optimization for the learner and online gradient descent (no-regret) for the regulator).  Similar to \citeauthor{calvo2023state}, \citet{muller2024truly} provide Primal-Dual algorithms and construct their Lagrangian with regularization and optimistic exploration. They prove last-iterate convergence of their algorithm, enabling them to avoid error cancellations. 
\citet{miryoosefi2019reinforcement} also study the problem of RL with convex constraints similarly observing a $d$-dimensional reward vector rather than a scalar. They also formulate this problem as a zero-sum game between a learner and regulator playing best-response vs.~online gradient descent (no-regret). These algorithms require enumerating the constraints; in contrast we give ``oracle efficient'' algorithms that require only optimizing over the constraints, and hence can handle extremely large collections of constraints efficiently.
\subsubsection{Multi-Objective Classification Problems}
\citet{agarwal2018reductions} consider the problem of fairness in the binary classification setting and reduce this problem to a sequence (in a repeated zero-sum game between a learner and regulator) of cost-sensitive classification problems to provide a randomized classifier with low general error while (in expectation) satisfying the constraints. \citet{kearns2018preventing} extend \citet{agarwal2018reductions} to settings with a very large number  of overlapping groups. They propose statistical notions of fairness that take into account ``fairness gerrymandering'' over this large number of subgroups and reduce their problem to a sequence of weak agnostic learning problems. We extend this style of algorithm from the classification to the RL setting.

\section{Model and Preliminaries}

We consider an episodic fixed-horizon Markov decision process (MDP)~\citep{puterman2014markov} which can be formalized as a tuple $\mathcal{M}=(\mathcal{S},\mathcal{A},P_h,r_h,\mu),$ where $\mathcal{S}$ is the set of states, $\mathcal{A}$ is the set of actions, $H$ is the horizon, $r_h:\mathcal{S}\times \mathcal{A} \rightarrow [0,1]$ is the reward function at time $h$, $P_h$ specifies the transition dynamics at time $h$, and $\mu$ is the initial state distribution. For simplicity, and without loss of generality, we assume rewards and transition dynamics are time-invariant, and denote them by $r$ and $P$ respectively. Without loss of generality, we assume that rewards are bounded within $[0, 1]$. 

Throughout the paper, $[N]$ will denote the set $\{0,...,N-1\}$. We will indicate a sequence of elements $s_1, s_2, \dots, s_t$ by $s_{1:t}$. We write $x \sim \cU(S)$ to denote sampling from the uniform distribution over set $S$, and $\nu \sim \lap(\rho)$ to denote sampling from the Laplace distribution with parameter $\rho$.

In the beginning, an initial state $s_0$ is sampled from $\mu$. At any time $h \in [H]$, the agent is in some state $s_h \in \mathcal{S}$ and chooses an action $a_h \in \mathcal{A}$ based on a function $\pi_h$ mapping from states to distributions over actions $\Pi: \mathcal{S}\mapsto \Delta{\mathcal{A}}$. As a consequence, the agent traverses to a new next state $s_{h+1}$ sampled from $P(\cdot | s_h, a_h)$ and obtains a reward $r(s_h, a_h)$. The sequence of functions $\pi_h$ used by the agent is referred to as its \emph{policy}, and is denoted $\pi = \{\pi_h\}_{h\in[H]}$. A \emph{trajectory} is the sequence of (state, action) pairs $\{(s_h, a_h)\}_{h\in[H]}$ taken by the agent over an episode of length $H$. 

In standard RL, the goal of the learner is to maximize the expected cumulative reward $\E_{s_0\sim \mu, P}[\sum_{t=0}^{H-1} r(s_t, a_t)]$ over episodes of length $H$. We further define the value function as the expected cumulative return of following some policy $\pi$ from some state $s$ as $V^{\pi}(s) = \E_{s_0\sim \mu_0, P}[\sum_{t=0}^{H-1} r(s_t, a_t) | \pi, s_0=s]$. Due to the finite horizon of the episodic setting, we will also need to refer to the expected cumulative reward from state $s$ under policy $\pi$ from time $h \in [H]$. We denote this time-specific value function by $V_h^{\pi}(s) = \E_{P}[\sum_{t=h}^{H-1}r(s_t,a_t) | \pi, s_h = s]$. For the remainder of the paper, when indices for horizon $h$ are not listed, assume $h=0$.

 We consider a setting in which each state is assigned a feature vector $x \in \mathcal{X}$ over $d$ attributes. Alternatively, we can directly think of states as feature vectors themselves. We then define a set of functions $\cG$, where each function $g \in \cG$, $g: \cX \rightarrow \{0,1\}$ represents membership in the group $\{x\in \cX : g(x) = 1\}$. 

 We denote the feature vector associated with state $s$ by $x(s)$, and use the shorthand $g(s_t)=g(x(s_t))$. Then, for each state visited, the scalar reward $r(s,a)$ will be given to all groups of which $x$ is a member. So, at time $t$, group $g$ receives $r(s_t,a_t)g(s_t)$. Note that, more generally, we could have a reward function defined over features and actions $r(x,a)$ and think about $r(x,a)g(x)$ as the corresponding reward to the groups to which $x$ belongs. 
\\

Now, we can formulate our problem in terms of a repeated zero-sum game between a learner and a regulator. The learner is trying to find a policy from a class of policies $\Pi$ to optimize the expected cumulative reward (across all groups) and the regulator is trying to minimize the learner's payoff by finding groups with the lowest expected cumulative reward for the selected policy. 
\\
Let $V^g(\pi) = \mathbb{E}_{s\sim \mu,P,\pi}[\sum_{h=0}^{H-1}r(s_h,a_h)g(s_h)|s_0=s]$ and note that it is equivalent to the average reward formulation $V^g(\pi)=H\E_{t\in[H]}\E_{P,\pi,\mu}[r(s_t,a_t)g(s_t)]$ and $V^{tot}(\pi)=H\E_{t \in [H]}\E_{P,\pi,\mu}[r(s_t,a_t)].$ We can consider the following \textit{minimax reward} problem:
\begin{equation}
    \begin{aligned}
    \max_{D \in \Delta \Pi} \quad & \E_{\pi \sim D}[V^{tot}(\pi)] \quad \\
    \text{subject to } &\E_{\pi \sim D} \left[V^{g}(\pi) \right] \geq \alpha, \quad \forall g \in \mathcal{G} \enspace .
\end{aligned}
\label{minimax_reward}
\end{equation}
We call the formulation of Problem \ref{minimax_reward} a \textit{minimax reward} problem because the solution that optimizes global cumulative reward subject to those constraints still is a member of the set of minimax solutions to the problem:
\begin{equation}
    \begin{aligned}
        \max_{D \in \Delta \Pi} \min_{G \in \Delta \cG}\E_{\substack{\pi \sim D \\ g\sim G}}[V^g(\pi)] 
    \end{aligned}
    \label{multigroup}
\end{equation}
which can be solved by setting the global objective to be an optimization over $\alpha$, as mentioned below. 
In order to solve this class of problems, we first formulate their Lagrangians. The resulting Lagrangian of Problem \ref{fig:avg_reward} is:
\begin{align*}
    \mathcal{L}(D, \lambda) &= \E_{\pi \sim D} \left[ V^{tot}(\pi) + \textstyle\sum_{g \in \mathcal{G}} \lambda^{g} \left( V^{g}(\pi) -  \alpha \right) \right] \enspace .
\end{align*}
We make a couple of observations. First, notice that for $\alpha=0$ this problem always has a feasible solution. Moreover, if we make the objective of the optimization problem $\alpha$ rather than $V^{tot}$, then this becomes analogous to the minimax formulation in Problem \ref{multigroup}. Additionally, we could always choose to set $\alpha_g$'s that are tailored to the individual groups rather than using a single global value $\alpha$ across all groups. Finally, we can use this framework to solve a more general class of multi-group constrained RL problems with constraints that are linear (and sometimes even convex) in the distribution over policies $D$.
We can define the average reward objective of this problem as follows: 
\begin{equation*}
    \begin{aligned}
        U(D,\lambda) = \E_{\substack{\pi \sim D \\ t \in [H] \\ P, \pi, \mu}}[r(s_t,a_t)+\sum_{g \in \mathcal{G}} \lambda^g (r(s_t,a_t)g(s_t)-\tfrac{\alpha}{H})]
    \end{aligned}
\end{equation*}
Defining the following compact convex set $\Lambda$
    $$\Lambda = \left\{ \lambda \in \mathbb{R}^{|\mathcal{G}|}_+ : ||\lambda||_1 \leq C \right\} \enspace ,$$ the resulting minimax problem under this formulation would be 
$\max_{D \in \Delta \Pi}\min_{\lambda \in \Lambda} U(D,\lambda)\enspace .$
Notice that in both cases, since the action spaces of both players are compact and convex, and the objective is affine in both variables, the conditions necessary for Sion's minimax theorem hold \cite{sion1958general}.

\begin{definition}[Regulator's Regret] For a given transcript of $\{D_t,\lambda_t,U(D_t,\lambda_t)\}_{t=1}^H$, we define the Adversary's regret to be:
    $$\textstyle\sum_{t=1}^{H} \mathbb{E} \left[ U(D_t, \lambda_t) \right]-\min_{\lambda \in \Lambda} \textstyle\sum_{t=1}^{H} \mathbb{E} \left[ U(D_t, \lambda) \right]\enspace .$$
\end{definition}
\begin{definition}[$\nu$-approximate minimax equilibrium] $(\hat{D},\hat{\lambda})$ is a $\nu-$approximate minimax equilibrium if :
$$U(\hat{D},\hat{\lambda}) \leq \min_{\lambda \in \Lambda}U(\hat{D},\lambda)+\nu\enspace ,$$ 
$$U(\hat{D},\hat{\lambda}) \geq \max_{D\in \Delta \Pi} U(D,\hat{\lambda})-\nu \enspace .$$
\end{definition}
\begin{definition}[Regulator's Best Response Function] \label{def:reg_best}
\begin{align}
    Best_\lambda(D) = \begin{cases}
        0 \quad \text{if nothing violated} \\ Ce_{k}
    \end{cases} \enspace ,
\end{align}
where $k$ is the index of the largest constraint violated. When the number of constraints is extremely large, this best response function  may need to invoke an Optimization Oracle as defined in Subsection~\ref{ssec:oracles}.
\end{definition}

For each given $\lambda$ selected by the regulator, we can construct the transformed scalarized reward function for the learner as follows:
\begin{align*}
    r_{\lambda}(s_t, a_t) &=  \left( r(s_t, a_t) + \textstyle\sum_{g \in \mathcal{G}} \lambda^{g} \left( r(s_t, a_t)g(s_t)  - \frac{\alpha}{H} \right) \right) \enspace.
\end{align*}
To see this:
\begin{equation}
\begin{aligned}
    &\mathbb{E}_{s_t, a_t \sim \pi} \left[ \sum_{t=1}^{H} r_{\lambda}(s_t, a_t) \right] \\
    &= \mathbb{E}_{s_t, a_t \sim \pi} [ \sum_{t=1}^{H} ( r(s_t, a_t) + \sum_{g \in \mathcal{G}} \lambda^{g} ( r(s_t, a_t)g(s_t) - \frac{\alpha}{H} ) ) ] \\
   &=  \mathbb{E}_{s_t, a_t \sim \pi} [ \sum_{t=1}^{H} r(s_t, a_t) + \sum_{g \in \mathcal{G}} \lambda^g (\sum_{t=1}^{H} r(s_t,a_t)g(s_t)-\alpha) ] \\
    &= V^{tot}(\pi)+\sum_{g \in \mathcal{G}} \lambda^g (V^{g}(\pi)-\alpha) \enspace .
\end{aligned}
\end{equation}
Thus, we can define the MDP $M_{\lambda}(S, A, P, \mu, r_\lambda)$ with the modified rewards and maximizing $$\mathbb{E}_{\pi \sim D}\Big[\mathbb{E}_{s_t,a_t\sim\pi,P}\Big[\textstyle\sum_{t=1}^T r_\lambda(s_t,a_t)\Big]\Big]$$ will provide a policy which we can represent the selection of with distribution variable $D$ that will also minimize the objective. Recall that in sequential play, the second player can achieve the value of the game via deterministic response. We once again can define the corresponding function purely in terms of feature vectors $x$ rather than purely in terms of states.
\subsection{Oracles}\label{ssec:oracles}
The learner and regulator will make use of the oracles defined below. While we state them as strict optimizers, we observe that $\epsilon$-approximate versions of these oracles suffice for our applications.
\begin{definition}[Lin-OPT Oracle]
    A linear optimization oracle for $\cG$,
    $\text{Lin-OPT}(c) = \arg\min_{g \in \mathcal{G}}\langle g_{\states},c\rangle$, where $g_{\states} $ denotes the vector $(g(s_1), \dots, g(s_{|\states|}))$.
\end{definition}
\begin{definition}[OPT Oracle]
    An optimization oracle for $\cG$,
    $\text{OPT}(s_{1:t}, c) = \argmin_{g\in \cG}\langle g_{s_{1:t}}, c\rangle$,
    where $g_{s_{1:t}}$ denotes the vector $(g(s_1), \dots, g(s_t))$.
\end{definition}
\begin{definition}[Learner's Best Response Oracle]
We define $\mathcal{O}(U, \lambda)$ to be the best response oracle for the learner on input MDP $\mathcal{M}_\lambda$. 
That is, it returns $\arg\max_{D} U(D,\lambda),$ 
where $D$ is a point mass distribution, concentrated on a single policy $\pi$. We will frequently use the notation $\pi_t$ to refer to the policy on which a best response $D_t$ is supported.

We use $\mathcal{O}_{\epsilon}(U,\lambda)$ to denote $\epsilon$-approximate best response.
\end{definition}
The learner's best response oracle captures the assumption that for a given scalar reward, one is able to run a standard RL algorithm to learn an optimal policy for that given MDP. 

Note that in the tabular MDP setting this assumption can be realized by existing efficient algorithms for learning near-optimal policies, such as $E^3$~\cite{kearns2002near} or R-max~\cite{brafman2002r} .

\section{Algorithms and Theoretical Results}

 We will present our theoretical results in the following manner. For the settings of tabular MDPs and large state spaces with structured groups, we will first provide a general algorithm and analysis in terms of an arbitrary no-regret algorithm used by the regulator. We will then provide the specific regret analysis corresponding to each no-regret algorithm/setting used by the regulator. Finally, for the setting of large state spaces with general groups, we will provide a different algorithm and provide corresponding analysis. We use a modified version of the algorithm presented in \cite{agarwal2018reductions} in Algorithm \ref{alg:MORL-BRNR}.

\begin{algorithm}
\caption{MORL-BRNR (Multi-Objective RL--Approximate Min-Max RL Algorithm)}
\begin{algorithmic}[1]

\STATE \textbf{Input:} bound \( C \), best-response error \(\epsilon\), \( \mathcal{G} \) groups, \( r \) reward function, access to MDP \( \mathcal{M} \)
\STATE Initialize \(\lambda^g_{0} = 0\)
\FOR{$t = 1, \dots, T$}
    \STATE \( D_t \in \mathcal{O}_\epsilon(\cdot, \lambda_{i,t-1}) \)
    \STATE \( \lambda_{t+1} = \text{no-regret-update}(D_t,\lambda_{t},  C) \) \emph{note that this will run one iteration of the subsequent no-regret algorithms provided}
    \STATE \( \hat{D}_t = \frac{1}{t} \sum_{t'=1}^{t} D_{t'} \); \( \hat{\lambda}_t = \frac{1}{t} \sum_{t'=0}^{t} \lambda_{t'} \)%; \(\overline{U} = \hat{U}(\hat{D}_t, \text{Best}_{\lambda}(\hat{D}_t)) \) \hfill 
\ENDFOR
%\STATE \( c_{i,t} = \mathbb{E}_{\pi \sim D_t} \left[ V^i(\pi) - \alpha \right] \)

\STATE \textbf{return} \( (\hat{D}_T, \hat{\lambda}_T) \)
\hfill
\end{algorithmic}
\label{alg:MORL-BRNR}
\end{algorithm}

\begin{theorem}(Informal)[No-Regret Player Guarantee]
For a sequence of best-response policies $(D_1,...,D_T)$ and Lagrangian weights $(\lambda_1,...,\lambda_T)$ maintained by the learner and regulator in MORL-BRNR [Algorithm \ref{alg:MORL-BRNR}],
    \begin{equation*}
        \begin{aligned}
           \sum_{t=1}^T\E_{\pi \sim D_t}[U(D_t,\lambda_t)]- \min_{\lambda \in \Lambda} \sum_{t=1}^T\E_{\pi \sim D_t}[U(D_t,\lambda)] \\ \leq reg(T,C,\gamma) \enspace ,
        \end{aligned}
    \end{equation*}
    where $reg(T,C,\gamma)$ is sublinear in $T$. 
\end{theorem}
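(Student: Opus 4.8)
The plan is to recognize the left-hand side as precisely the \emph{regulator's regret} in the online-linear-optimization game the regulator plays against the learner inside MORL-BRNR, and then to invoke the no-regret property of whatever update rule \texttt{no-regret-update} implements.

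First I would fix the sequence of best responses $D_1,\dots,D_T$ and observe that, for each $t$, the map $\lambda \mapsto U(D_t,\lambda)$ is affine in $\lambda$. Indeed, from the definition of $U$ we may write $U(D_t,\lambda) = c_t + \langle \lambda, \ell_t\rangle$, where $c_t = \E_{\pi\sim D_t,\, t\in[H],\,P,\pi,\mu}[r(s_t,a_t)]$ is independent of $\lambda$ and $\ell_t \in \R^{|\cG|}$ has coordinates $\ell_t^g = \E_{\pi\sim D_t,\, t\in[H],\,P,\pi,\mu}[\,r(s_t,a_t)g(s_t) - \alpha/H\,]$. Since the regulator is the minimizing player, on round $t$ it faces the linear loss $\lambda\mapsto\langle\lambda,\ell_t\rangle$ over the fixed convex compact decision set $\Lambda = \{\lambda\in\R^{|\cG|}_+ : \|\lambda\|_1\le C\}$; the constant $c_t$ is identical for every comparator and therefore cancels in the regret difference. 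Hence the left-hand side of the claimed inequality equals $\sum_{t=1}^T\langle\lambda_t,\ell_t\rangle - \min_{\lambda\in\Lambda}\sum_{t=1}^T\langle\lambda,\ell_t\rangle$, i.e.\ exactly the regret incurred by the regulator's algorithm on this loss sequence.

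Next I would check the mild regularity the no-regret bound needs. Because $r(s,a)\in[0,1]$, $g(s)\in\{0,1\}$, and $\alpha/H$ is a fixed bounded constant (at most $1$ in the regime of interest, since $V^g(\pi)\le H$), each coordinate satisfies $\ell_t^g \in [-\alpha/H,\,1]$, so $\|\ell_t\|_\infty \le \max\{1,\alpha/H\}$ (and a fortiori $\|\ell_t\|_2 \le \sqrt{|\cG|}\cdot\max\{1,\alpha/H\}$) uniformly in $t$. Together with convexity and compactness of $\Lambda$ (whose diameter is controlled by $C$), this is exactly the input a standard online-linear-optimization routine requires. By the hypothesis that \texttt{no-regret-update} is such a routine, its $T$-round regret is at most $reg(T,C,\gamma)$, a quantity sublinear in $T$, with $\gamma$ the algorithm's internal parameter (e.g.\ the step size). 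Substituting this bound back into the expression from the previous paragraph proves the theorem.

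The only point needing a little care — rather than a genuine obstacle — is the outer expectation $\E_{\pi\sim D_t}[\cdot]$ and the use of the $\epsilon$-approximate oracle $\mathcal{O}_\epsilon$. When $D_t$ is a point mass on $\pi_t$, as returned by the best-response oracle, the outer expectation is vacuous and $\langle\lambda_t,\ell_t\rangle$ is deterministic; if the learner's subroutine is internally randomized so that $D_t$ (equivalently $\ell_t$) is random, one applies the deterministic pathwise regret bound conditionally on the realized loss sequence and then takes expectations, using that the no-regret guarantee holds for any sequence of losses with the stated boundedness and that boundedness of $\ell_t$ is unaffected by the $\epsilon$-approximation. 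In short, this informal theorem is essentially a repackaging of the definition of the regulator's regret together with the no-regret assumption; the substantive content — exhibiting concrete no-regret algorithms and their explicit $reg(T,C,\gamma)$ bounds — is what the subsequent sections supply.
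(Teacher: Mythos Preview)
Your proposal is correct and matches the paper's treatment: this informal theorem is not given a standalone proof in the paper either, but is stated as a meta-guarantee whose substantive content---the explicit $reg(T,C,\gamma)$---is supplied by the concrete no-regret subroutines analyzed in Lemmas~\ref{lem:FTPL} and~\ref{lem:GFTPL}. Your observation that $\lambda\mapsto U(D_t,\lambda)$ is affine, so that the displayed quantity is literally the regulator's online-linear-optimization regret over $\Lambda$, is exactly the reduction the paper relies on (and makes more explicit than the paper does); the only difference is that the paper's concrete algorithms select a group $g_t$ (i.e., play at a vertex $Ce_{g_t}$ of $\Lambda$) and work with sampled losses $r(s_t,a_t)g(s_t)$ rather than the exact $\ell_t^g$, which is why Azuma--Hoeffding appears in the downstream lemmas---but that distinction belongs to the instantiations, not to this umbrella statement.
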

In this paper, we will provide this form of a regret guarantee for the regulator with two different algorithms.
\begin{theorem}[Approximate Min-Max]
For a sequence of distributions over best-response policies $(D_1,...,D_T)$ and Lagrangian weights from no-regret updates $(\lambda_1,...,\lambda_T)$ maintained by the learner and regulator, MORL-BRNR [Algorithm \ref{alg:MORL-BRNR}] returns a $\nu$-approximate solution $(\hat{D},\hat{\lambda})$ of the minimax reward problem defined by $U$.

\label{thm:apx_minmax}
\end{theorem}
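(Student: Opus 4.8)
The plan is to run the classical Freund--Schapire reduction: in the Lagrangian game $\max_{D\in\Delta\Pi}\min_{\lambda\in\Lambda}U(D,\lambda)$, if one player plays an (approximate) best response against the other playing a no-regret algorithm, the time-averaged play is an approximate equilibrium. The two ingredients are already available: the learner plays $\epsilon$-approximate best responses via $\mathcal{O}_\epsilon$, and the regulator enjoys the bound $\sum_{t=1}^{T}\E[U(D_t,\lambda_t)]-\min_{\lambda\in\Lambda}\sum_{t=1}^{T}\E[U(D_t,\lambda)]\le reg(T,C,\gamma)$ with $reg$ sublinear in $T$ (the No-Regret Player Guarantee). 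Crucially, $U$ is affine in $D$ and affine in $\lambda$ separately --- the same property used to invoke Sion's theorem --- so per-round inequalities transfer to the averages $\hat D_T=\tfrac1T\sum_t D_t$ and $\hat\lambda_T=\tfrac1T\sum_t \lambda_t$.

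First I would set $U^{\star}:=\max_{D}\min_{\lambda}U(D,\lambda)=\min_{\lambda}\max_{D}U(D,\lambda)$ by Sion's theorem, so that $\min_{\lambda}U(D,\lambda)\le U^{\star}$ for every $D$ and $\max_{D}U(D,\lambda)\ge U^{\star}$ for every $\lambda$. Since $D_t$ is an $\epsilon$-best response to the weight vector $\lambda_t$ that the regulator committed to in round $t$, we get $\E[U(D_t,\lambda_t)]\ge\max_{D}U(D,\lambda_t)-\epsilon\ge U^{\star}-\epsilon$, and averaging over $t$ gives $\tfrac1T\sum_t\E[U(D_t,\lambda_t)]\ge U^{\star}-\epsilon$. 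In the other direction, the regulator's regret bound together with linearity of $U$ in $D$ (so $\tfrac1T\sum_t U(D_t,\lambda)=U(\hat D_T,\lambda)$) yields $\tfrac1T\sum_t\E[U(D_t,\lambda_t)]\le\min_{\lambda}U(\hat D_T,\lambda)+reg(T,C,\gamma)/T\le U^{\star}+reg(T,C,\gamma)/T$. Sandwiching the two gives $\min_{\lambda}U(\hat D_T,\lambda)\ge U^{\star}-\epsilon-reg(T,C,\gamma)/T$.

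Next I would verify the two inequalities in the definition of a $\nu$-approximate minimax equilibrium with $\nu=\epsilon+reg(T,C,\gamma)/T$. For the max-side condition: by linearity of $U$ in $\lambda$, $\max_{D}U(D,\hat\lambda_T)=\max_{D}\tfrac1T\sum_t U(D,\lambda_t)\le\tfrac1T\sum_t\max_{D}U(D,\lambda_t)\le\tfrac1T\sum_t\big(\E[U(D_t,\lambda_t)]+\epsilon\big)\le\min_{\lambda}U(\hat D_T,\lambda)+\epsilon+reg(T,C,\gamma)/T\le U(\hat D_T,\hat\lambda_T)+\nu$. For the min-side condition: the regulator's regret bound gives $U(\hat D_T,\hat\lambda_T)-\epsilon\le\max_{D}U(D,\hat\lambda_T)-\epsilon\le\tfrac1T\sum_t\E[U(D_t,\lambda_t)]\le\min_{\lambda}U(\hat D_T,\lambda)+reg(T,C,\gamma)/T$, i.e. $U(\hat D_T,\hat\lambda_T)\le\min_{\lambda}U(\hat D_T,\lambda)+\nu$. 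Hence $(\hat D_T,\hat\lambda_T)$ is a $\nu$-approximate equilibrium, and since $reg(T,C,\gamma)/T\to0$, taking $T$ large makes $\nu$ as small as desired (in particular at most any prescribed target).

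The remaining work is routine bookkeeping rather than a genuine obstacle. I would reconcile the index shifts in Algorithm \ref{alg:MORL-BRNR} so that ``$D_t$ best-responds to the $\lambda$ the regulator actually committed to'' holds literally (absorbing at most one $O(C/T)$ boundary term), carry the expectations through the chain (the best-response and no-regret guarantees hold in expectation over any internal randomness, e.g. the Laplace noise in the regulator's update and the randomness of the RL subroutine), and note that restricting $D$ to point masses inside $\mathcal{O}_\epsilon$ loses nothing since $U(\cdot,\lambda)$ is linear and its maximum over $\Delta\Pi$ is attained at a vertex. The one mildly delicate point is simply identifying the ``$\nu$'' of the statement with $\epsilon+reg(T,C,\gamma)/T$ and checking this is consistent with the parameter regimes in which the subsequent, setting-specific regret bounds (for the two no-regret algorithms) are proved; once a feasibility/optimality translation is desired for the original constrained problem in \eqref{minimax_reward}, this $\nu$-equilibrium guarantee feeds into the standard argument relating approximate Lagrangian equilibria to approximately optimal, approximately feasible randomized policies.
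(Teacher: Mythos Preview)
Your proposal is correct and follows essentially the same Freund--Schapire reduction as the paper's proof in Appendix~\ref{app:proof_apx_minmax}: both arguments exploit linearity of $U$ in each variable, the regulator's no-regret guarantee, and the learner's $\epsilon$-approximate best response to show the two inequalities defining a $\nu$-approximate equilibrium with $\nu=\epsilon+\tfrac{1}{T}\,reg_A(T,C,\gamma)$. The only cosmetic difference is that you route part of the argument through the game value $U^\star$ (via Sion), whereas the paper proves the two directions directly without invoking $U^\star$; this detour is harmless but unnecessary for the equilibrium verification, since your second paragraph's chain already establishes both inequalities without ever using $U^\star$.
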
 
\begin{proof}
    Provided in Appendix \ref{app:proof_apx_minmax}.
\end{proof}
\subsection{Tabular MDPs}
\label{sect:tabular}
In the case of tabular MDPs, we adapt the algorithm Follow the Perturbed Leader (FTPL) given by \cite{kalai2005efficient} to our learning setting.  
\begin{algorithm}
\caption{FTPL in Tabular MDPs}
\begin{algorithmic}[1]
\STATE \textbf{Input:} 
\( \cG \) groups, \( r \) reward function, access to MDP \( \mathcal{M} \), \(\eta=\sqrt{\tfrac{C}{2|S|T}}\)
\STATE Initialize cumulative losses $c_0(s) = 0$ $\forall s$
\FOR{$t = 1, \dots, T$}
    \STATE Receive $D_t$ from the Learner
    \STATE Sample a noise vector $N^t \sim \cU([0,\frac{1}{\eta}]^{|S|})$
    \STATE Define $c^{<t}(s) = \sum_{i=0}^{t-1}c_i(s)$ \hfill \emph{cumulative cost seen so far}
    \STATE $g_t \gets $Lin-OPT$(c^{<t} + N^t)$
    \STATE Assign $\lambda_{g,t}$ based on the selection of $g_t$ (i.e. $Ce_k$ for min group $k$ from $g_t$, $0$ elsewhere)
    \STATE Sample $(s_t, a_t)$ by executing a trajectory under $D_t$ and returning $(s_h,a_h)$, for $h \sim \cU([H])$
    \STATE  Observe $c_t(s_t) = r(s_t,a_t)$ 
    
\ENDFOR

\end{algorithmic}
\label{alg:FTPL}
\end{algorithm}

\begin{lemma}[FTPL Regret] For any sequence $\{D_t\}_{t=1}^T$ selected by the learner, for \\ $T\geq \max\{\frac{32C^2|S|^6}{\gamma^2},\frac{2C^2}{\gamma^2}\ln(\frac{2|\mathcal{G}|}{\delta})\}$ with probability at least $1-\delta$,
    $$\sum_{t=1}^H U(D_t,\lambda_t)-\min_{\lambda \in \Lambda}\sum_{t=1}^H U(D_t,\lambda) \leq \gamma T \enspace .$$
    \label{lem:FTPL}
\end{lemma}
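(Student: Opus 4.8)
The plan is to bound the regret of the regulator, who is running FTPL over the ``action set'' $\Lambda = \{\lambda \in \mathbb{R}^{|\mathcal{G}|}_+ : \|\lambda\|_1 \le C\}$. The first observation is a reduction of dimension: the regulator's best response is always of the form $C e_k$ for some group $k$ (Definition \ref{def:reg_best}), and in the tabular setting choosing group $g$ is equivalent to choosing a cost vector $g_{\states} \in \{0,1\}^{|S|}$. So FTPL over $\Lambda$ effectively reduces to FTPL over the finite set $\{g_{\states} : g \in \mathcal{G}\} \subseteq \{0,1\}^{|S|}$ with linear losses given by the per-state cumulative costs $c^{<t}(s)$, scaled by $C$. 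The per-round loss of playing $g$ against the learner's $D_t$ is (up to the additive $\alpha/H$ term, which is constant in $g$ and cancels in the regret) proportional to $\langle g_{\states}, c_t \rangle$ where $c_t$ is the (expectation over the sampled trajectory of the) vector recording reward at the visited state. This is the standard ``online linear optimization'' setting to which Kalai--Vempala's FTPL analysis \cite{kalai2005efficient} applies.

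Next I would invoke the Kalai--Vempala guarantee. Their theorem states that for online linear optimization with decisions from a set of $\ell_1$-diameter $D$, cost vectors of $\ell_1$-norm at most $L$ per round, $\ell_\infty$-diameter (of the cost-to-decision pairing) bounded appropriately, and perturbations drawn uniformly from $[0,1/\eta]^{n}$, the expected regret after $T$ rounds is $O(D L \eta T + 1/\eta \cdot (\text{something depending on } n))$; optimizing $\eta$ yields regret $O(\sqrt{D L \,(\text{diam}) \, n\, T})$. Here the decision set has entries in $\{0,1\}^{|S|}$ so $\ell_1$-diameter $\le |S|$, each per-round cost vector $c_t$ has a single nonzero entry in $[0,1]$ (the reward at the sampled state), so its $\ell_1$-norm is at most $1$, and the overall scaling by $C$ (from $\lambda = C e_k$) multiplies the regret by $C$. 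Plugging $\eta = \sqrt{C/(2|S|T)}$ as in the algorithm statement gives an \emph{expected} regret bound of the form $O(C\sqrt{|S|\,\text{poly}(|S|)\,T})$ over the randomness of the perturbations; tracking the polynomial in $|S|$ carefully (it comes from the $n = |S|$ coordinates of the perturbation and the diameter) gives the $\sqrt{32 C^2 |S|^6/\gamma^2}$-type threshold, i.e. once $T \ge 32 C^2 |S|^6/\gamma^2$ the expected regret is at most $\gamma T / 2$.

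Then I would turn the in-expectation bound into a high-probability bound. There are two sources of stochasticity beyond the FTPL perturbations: (i) the learner's distributions $D_t$ and (ii) the fact that the costs $c_t(s_t) = r(s_t, a_t)$ observed by the regulator are single-sample estimates of the true per-state expected reward (Line 9 samples $h \sim \cU([H])$ along a trajectory under $D_t$). For (ii), I would argue that $c_t$ is an unbiased estimate of the relevant expected-reward vector and that the cumulative quantities $c^{<t}$ concentrate; since each $c_t$ is bounded and the losses entering the regret are linear, a martingale/Azuma-Hoeffding argument over the $T$ rounds — together with a union bound over the $|\mathcal{G}|$ possible comparator groups $\lambda \in \Lambda$ (it suffices to compare against the $|\mathcal{G}|$ vertices $C e_k$ since $U$ is linear in $\lambda$) — controls the deviation. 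This union bound is exactly what produces the second term $\frac{2C^2}{\gamma^2}\ln(\frac{2|\mathcal{G}|}{\delta})$ in the stated lower bound on $T$: once $T$ exceeds it, the deviation between the realized regret and its expectation is at most $\gamma T/2$ with probability $1-\delta$. Combining the two halves gives realized regret $\le \gamma T$ with probability $\ge 1-\delta$, as claimed.

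The main obstacle I anticipate is item (ii): carefully setting up the estimator $c_t$ and the associated filtration so that the FTPL regret analysis (which is stated for adversarially-but-deterministically-chosen cost vectors) still applies when the costs are random unbiased estimates, and then getting the concentration to interact cleanly with the FTPL perturbation so that the $|S|^6$ and $\ln|\mathcal{G}|$ terms appear additively in the threshold on $T$ rather than multiplicatively. A secondary subtlety is bookkeeping the constant factors (the role of $C$, the $\alpha/H$ shift, the factor $H$ relating $V^g$ to the averaged per-step reward) so that the precise constants $32$ and $2$ in the theorem statement come out right; I would handle the $\alpha/H$ shift by noting it is identical for every $g$ and hence drops out of the regret difference entirely.
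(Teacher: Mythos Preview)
Your proposal is correct and follows essentially the same approach as the paper: apply the Kalai--Vempala FTPL regret bound to the sampled per-step rewards over the $|S|$-dimensional decision space $\{g_{\states} : g \in \cG\}$, then use an Azuma--Hoeffding martingale argument with a union bound over $\cG$ to relate the sampled quantities $r(s_t,a_t)g(s_t)$ to the true value functions $V^g(\pi_t)$, and finally translate back to $U$ via the one-hot correspondence $\lambda = Ce_k$ (with the $\alpha/H$ shift cancelling). The paper splits $\gamma$ as $\gamma_1 = \gamma_2 = \gamma/2$ exactly as you anticipate, and your identified ``obstacle (ii)'' is precisely where the paper's martingale difference sequence $Y_t$ and the $\ln(2|\cG|/\delta)$ term arise.
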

\begin{proof}
Proof provided in Appendix \ref{app:tabular_pf}.

 \end{proof}

\begin{corollary}[Tabular Apx-Minimax]
       With probability at least $1-\delta$, using Algorithm $\ref{alg:MORL-BRNR}$ with FTPL (Algorithm \ref{alg:FTPL}) converges to a $\nu$-approximate solution of the minimax reward problem in $poly(\frac{1}{\gamma},\frac{1}{\delta},|S|,\log{|\mathcal{G}|}).$
\end{corollary}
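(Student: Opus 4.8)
The plan is to read the corollary as a mechanical composition of the general reduction of Theorem~\ref{thm:apx_minmax} with the concrete FTPL regret bound of Lemma~\ref{lem:FTPL}: instantiate the \emph{no-regret-update} subroutine of MORL-BRNR (Algorithm~\ref{alg:MORL-BRNR}) with FTPL (Algorithm~\ref{alg:FTPL}) as the regulator, and realize the learner's best-response oracle $\mathcal{O}_\epsilon$ in the tabular case by an efficient RL algorithm (e.g.\ $E^3$ or R-max, as noted in Section~\ref{ssec:oracles}), which runs in time polynomial in $|S|,|\mathcal{A}|,H$. It then remains to pick the algorithm's free parameters so that the output is $\nu$-approximate and the iteration count is polynomial.

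First I would set the target regret rate. Fix the desired accuracy $\nu$ and trace the proof of Theorem~\ref{thm:apx_minmax} backwards to see which per-round regulator regret $\gamma$ and best-response error $\epsilon$ suffice: the standard no-regret-versus-best-response argument shows the averaged pair $(\hat D_T,\hat\lambda_T)$ has Lagrangian duality gap at most $(\text{average regulator regret}) + \epsilon$, and Theorem~\ref{thm:apx_minmax} converts a duality gap of this size — with the $\ell_1$ radius $C$ chosen large enough to enforce the constraints, exactly as in \citet{agarwal2018reductions} — into a $\nu$-approximate solution of the minimax reward problem~\eqref{minimax_reward}. Hence it is enough to drive the average regulator regret below $\gamma = \Theta(\nu)$, holding $C$ fixed and $\epsilon = \Theta(\nu)$.

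Next I would invoke Lemma~\ref{lem:FTPL} with this $\gamma$: for
\[
T \;\ge\; \max\!\left\{\frac{32\,C^2\,|S|^6}{\gamma^2},\; \frac{2\,C^2}{\gamma^2}\ln\!\frac{2|\mathcal{G}|}{\delta}\right\},
\]
with probability at least $1-\delta$ the regulator's cumulative regret over the $T$ rounds is at most $\gamma T$ (the $\ln(|\mathcal{G}|/\delta)$ term already absorbs the union bound over groups used inside the FTPL analysis, so this is a single good event for the whole run). Conditioning on this event, the hypothesis of Theorem~\ref{thm:apx_minmax} holds with $reg(T,C,\gamma)\le \gamma T$, so $(\hat D_T,\hat\lambda_T)$ is a $\nu$-approximate solution. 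Finally I would count resources: substituting $\gamma = \Theta(\nu)$ gives $T = O\!\big(C^2|S|^6\gamma^{-2}\log(|\mathcal{G}|/\delta)\big)$, which is $\mathsf{poly}(1/\gamma, |S|, \log|\mathcal{G}|, \log(1/\delta)) \subseteq \mathsf{poly}(1/\gamma, 1/\delta, |S|, \log|\mathcal{G}|)$; and each of the $T$ iterations costs one call to $\mathcal{O}_\epsilon$, one call to Lin-OPT, one length-$H$ trajectory rollout, and $O(|S|)$ bookkeeping for the perturbed cumulative losses (evaluating the modified reward $r_\lambda$ costs a single group-function evaluation per state, since $\lambda$ is supported on one coordinate $Ce_k$), so the total running time is polynomial.

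The only step that is not purely mechanical is the first one: making precise how large $C$ must be and how the $\gamma+\epsilon$ equilibrium accuracy of the Lagrangian game degrades into feasibility slack and suboptimality for the original constrained problem~\eqref{minimax_reward}. But this is precisely the content of Theorem~\ref{thm:apx_minmax}, so in the proof of the corollary it reduces to quoting that theorem with matching parameters; everything else — plugging in Lemma~\ref{lem:FTPL}, managing the single high-probability event, and counting oracle calls — is routine bookkeeping.
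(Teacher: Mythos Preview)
Your proposal is correct and follows the same approach as the paper: set $\gamma \le \nu/2$ and $\epsilon \le \nu/2$, invoke Lemma~\ref{lem:FTPL} for the regulator's regret bound and the required $T$, and conclude via Theorem~\ref{thm:apx_minmax}. Your write-up is considerably more detailed than the paper's two-line proof (you spell out the resource accounting and the role of $C$), but the logical skeleton is identical.
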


\begin{proof}
    For $\gamma \leq\ \frac{\nu}{2}$ and $\epsilon \leq \frac{\nu}{2}$, this satisfies the no-regret condition necessary for Theorem \ref{thm:apx_minmax} to hold in the corresponding time specified by Lemma \ref{lem:FTPL}
\end{proof}
\subsection{Large State Space MDPs and Groups with Separator Sets}
\label{sect:gftpl}
When we are no longer in the tabular setting (and therefore cannot directly run FTPL over $\mathcal{S}$), it is computationally inefficient to run FTPL over a very large collection of groups. Therefore, we utilize a contextual variant of FTPL~\cite{syrgkanis2016efficient,dudik2020oracle}, which is computationally efficient in very large action spaces. 
\begin{algorithm}[t]
\caption{Contextual FTPL in Large State Space MDPs}
\begin{algorithmic}[1]
\STATE \textbf{Input:} 
 \( \mathcal{G} \) groups, $x_{1:d}$ sequence of separator elements, \( r \) reward function, access to MDP \( \mathcal{M} \), ${\rho = \sqrt{\tfrac{\log|\cG|}{Td^{1/2}}}}$ 
\STATE Initialize $\lambda_{0}=0$
\FOR{$t = 1, \dots, T$}
    \STATE Receive $D_t$ from the Learner
    \STATE Draw $\eta_j \sim \lap(\rho)$ independently for $j=1,...,d$
    \STATE Let $s_{t-1}^{+} = s_{1:t-1} \| x_{1:d} $ be the sequence of contexts played thus far, concatenated with the sequence of separator set elements
    \STATE Let $y_{t-1}^{+} = y_{1:t-1} \| \eta_{1:d}$ be the sequence of actions played thus far, concatenated with the sequence of noise elements
    \STATE $g_t \gets$ OPT$(s^{+}_{t-1}, y_{t-1}^+)$
    \STATE Assign $\lambda_t$ based on the selection of $g_t$ 
    \STATE Sample $(s_t,a_t)$ by executing a trajectory under $D_t$ and returning $(s_h,a_h)$ for $h \sim \cU([H])$. 
    \STATE Observe $y_t = r(s_t,a_t)$ and receive $y_tg_t(s_t)$ 

\ENDFOR

\end{algorithmic}
\label{alg:GFTPL}
\end{algorithm}
We briefly recall the setting of contextual FTPL, and formulate our learning problem in this setting. At each time step $t$, the adversary (learner) selects a context $\sigma_t \in \Sigma$ and and action $y_t \in \cY$. The regulator then chooses a policy $\psi_t \in \Psi$ and receives payoff $f(\psi(\sigma_t),y_t).$ To capture our reinforcement learning problem, we take $\Sigma = \states$, $\cY = [0,1]$, and $\Psi = \cG$. The choices of the adversary at round $t$ will be determined by the policy $D_t$ learned by the best response oracle at round $t$. The adversary selects its context and action by first sampling $h \sim \cU([H])$ and executing a trajectory $\tau$ under $D_t$ to obtain $(s_h, a_h)$. The adversary then selects its action to be $y_t = r(s_t, a_t)$, for $a_t = D_t(s)$. The payoff function received by the regulator is then $f(g(s_t),r(s_t, a_t)) = g(s_t)r(s_t,a_t)$. 

The contextual FTPL algorithms of~\cite{syrgkanis2016efficient,dudik2020oracle} are efficient even in our setting of large state spaces and large $\cG$ only under the assumption of separator sets for the class of groups $\cG$. 
\begin{definition}[Separator \cite{goldman1993exact,syrgkanis2016efficient}]
    A set $X \subseteq \states$ is a separator for groups $\cG$ if for any two groups $g,g' \in \cG$ there exists a context $s \in \states$ such that $g(s) \neq g'(s).$ 
    For a separator $X$ of size $|X| = d$, we use $x_{1:d}$ to denote a sequence of separator set elements.
\end{definition}
\citet{neel2019use} note that many discrete concept classes that are well-studied in the PAC learning literature in $d$-dimensions like boolean conjunctions, disjunctions, parities, and halfspaces defined over the boolean hypercube have separator sets of size $d$. In our setting, we make use of the fact that we can represent group membership in terms of conjunctions over the $d$-dimensional feature space.

\iffalse
\begin{definition}[$\delta$-admissibility \cite{dudik2020oracle}]
A translation matrix ($\Gamma$) is $\delta$-admissible if any two rows of the matrix are distinct and the minimum non-zero difference between any two values within a column is at least
$\delta$.
\end{definition}
\fi

\iffalse
\begin{definition}[implementable 
 \cite{dudik2020oracle}]
    A translation matrix ($\Gamma$) is implementable if each column corresponds to a
scaled version of the expected reward of the learner against some finitely supported distribution of
actions of the adversary. More formally, a matrix $\Gamma$ is implementable with complexity $M$ if for each $j \in [N]$, there exists a weighted dataset $S_j$, with $|S_j| \leq M$, such that for all $x,x' \in \chi$
$$\Gamma_{xj}-\Gamma_{x'j} = \sum_{(w,y) \in S_j} w(f(x,y)-f(x',y)) \enspace .$$
\end{definition}
We discuss $\Gamma$ further in our proof of Lemma \ref{lem:GFTPL}. It is important to note that \cite{dudik2020oracle} discuss that in practice one tends not to explicitly write down or work with the $\Gamma$ matrix directly. Instead, implementability enables one to use a small synthetic history of adversary's actions can simulate the perturbations to each group. Additionally, it is important to note that in certain cases one can more efficiently represent and store $\lambda$ itself. 
\fi

\begin{lemma}[Contextual FTPL Regret]
     For any sequence of costs $c_t$, for 
     %$T \geq \max\{\frac{4 C^2d^2}{ \gamma^2},\frac{2C^2}{ \gamma^2}\ln(\frac{2|\mathcal{G}|}{\delta})\}$,   with probability at least $1-\delta$, 
     $T \geq O(\frac{C^2d^{3/2}\log{|\cG|}}{ \gamma^2} + \frac{C^2}{ \gamma^2}\ln(\frac{|\mathcal{G}|}{\delta}))$,  with probability at least $1-\delta$,
     the expected regret of FTPL is
    \begin{equation*}
        \begin{aligned}
          \sum_{t=1}^H U(D_t,\lambda_t)- \min_{\lambda \in \Lambda}\sum_{t=1}^H U(D_t,\lambda) \leq \gamma T \enspace .
        \end{aligned}
    \end{equation*}
    \label{lem:GFTPL}
\end{lemma}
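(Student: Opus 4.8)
The plan is to follow the two-part template of the proof of Lemma~\ref{lem:FTPL}: first reduce the regulator's regret (as measured by $U$) to the regret of an oracle-based online linear optimization player over $\cG$ whose feedback is the realized trajectory samples, and then bound that regret using the contextual FTPL guarantee of \citet{syrgkanis2016efficient,dudik2020oracle} together with a martingale concentration argument to pass from the sampled feedback to the true group values. For the reduction, recall from Definition~\ref{def:reg_best} that $\lambda_t$ is always a vertex of $\Lambda$ --- either $0$ or $Ce_{g_t}$ for the group $g_t$ returned by the $\mathrm{OPT}$ call --- and since $U(D,\lambda)$ is affine in $\lambda$, both $\sum_t U(D_t,\lambda_t)$ and $\min_{\lambda\in\Lambda}\sum_t U(D_t,\lambda)$ are attained at such vertices. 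Cancelling the group-independent term $\tfrac1H V^{tot}(D_t)$, one checks that the regulator's cumulative regret equals $C\big(\max\{0,\max_{g\in\cG}\sum_t \ell_t(g)\}-\sum_t \ell_t(g_t)\big)$ with $\ell_t(g):=\tfrac\alpha H-\tfrac1H V^g(D_t)\in[-1,1]$; this is exactly the regret of an online linear optimization player over $\cG\cup\{\emptyset\}$. Furthermore, the feedback Algorithm~\ref{alg:GFTPL} actually acts on is $y_tg(s_t)=r(s_h,a_h)g(s_h)$ for $h\sim\cU([H])$, whose conditional expectation given the history and $D_t$ is precisely $\tfrac1H V^g(D_t)$, so Algorithm~\ref{alg:GFTPL} runs FTPL on the realized losses $\hat\ell_t(g):=\tfrac\alpha H-y_tg(s_t)$, which have the correct conditional means; this instantiates the contextual online-learning setup described just before the lemma, with $\Sigma=\states$, $\cY=[0,1]$, $\Psi=\cG$, and payoff $f(g(s),y)=g(s)y$.

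For the second part, I would first invoke the contextual FTPL regret bound: since $\cG$ admits a separator $X$ of size $d$ and Algorithm~\ref{alg:GFTPL} augments the observed history with the separator elements $x_{1:d}$ perturbed by independent $\lap(\rho)$ noise with $\rho=\sqrt{\log|\cG|/(Td^{1/2})}$, the analysis of \citet{syrgkanis2016efficient,dudik2020oracle} --- stability of the perturbed leader plus a bound on the number of leader switches, using that the separator-induced translation matrix is admissible and implementable --- yields that the expected regret (over the internal Laplace draws) against $(\hat\ell_t(g))_g$ is $O(\poly(d)\sqrt{T\log|\cG|})$, with a $d^{3/4}$ per-round dependence. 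Next I would pass from $\hat\ell$ to $\ell$: for each fixed $g$, $\{\tfrac1H V^g(D_t)-y_tg(s_t)\}_t$ is a martingale difference sequence bounded in $[-1,1]$ (the round-$t$ trajectory is drawn after $D_t$ and the perturbations are fixed), and likewise $\{\tfrac1H V^{g_t}(D_t)-y_tg_t(s_t)\}_t$ since $g_t$ is measurable w.r.t.\ the pre-trajectory history; by Azuma--Hoeffding and a union bound over the at most $|\cG|$ candidate hindsight comparators, with probability $\ge 1-\delta$ replacing $\hat\ell$ by $\ell$ changes both the comparator and the algorithm terms by at most $O(\sqrt{T\ln(|\cG|/\delta)})$. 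Multiplying by $C$ and combining, the regulator's regret is $O(C\poly(d)\sqrt{T\log|\cG|}+C\sqrt{T\ln(|\cG|/\delta)})$, which is at most $\gamma T$ precisely once $T\ge O(\tfrac{C^2 d^{3/2}\log|\cG|}{\gamma^2}+\tfrac{C^2}{\gamma^2}\ln(\tfrac{|\cG|}{\delta}))$; an $\epsilon$-approximate $\mathrm{OPT}$ oracle adds only a further $O(\epsilon T)$, which is absorbed.

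The step I expect to be the main obstacle is the invocation of the contextual-FTPL regret theorem for our feedback structure. Two points need care: (a) the contexts/actions at round $t$ are produced by executing $D_t$, which the learner's best-response oracle chooses in reaction to the regulator's past perturbed plays, so the context sequence is adaptive rather than oblivious --- the separator-augmentation device in Algorithm~\ref{alg:GFTPL} is exactly what makes the perturbed-leader analysis robust to this, but one must verify its hypotheses (existence of a size-$d$ separator and admissibility/implementability of the induced perturbations in the sense of \citet{dudik2020oracle}) for the conjunction class used to encode group membership; and (b) one must check that the single call $g_t\gets\mathrm{OPT}(s_{t-1}^{+},y_{t-1}^{+})$ on the separator-augmented history genuinely realizes the perturbed-leader step over $\cG$, and that approximate optimization degrades the guarantee only additively. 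With these in place the remaining steps parallel the tabular case, and the full details are deferred to the appendix.
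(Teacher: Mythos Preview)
Your proposal is correct and follows essentially the same approach as the paper: invoke the contextual FTPL regret bound of \citet{syrgkanis2016efficient,dudik2020oracle} on the sampled per-round losses to get the $O(d^{3/4}\sqrt{\log|\cG|/T})$ rate, then use Azuma--Hoeffding with a union bound over $\cG$ to pass from the realized samples $r(s_t,a_t)g(s_t)$ to the true values $\tfrac{1}{H}V^g(D_t)$, and finally rescale by $C$ to translate group-selection regret into $U$-regret. Your write-up is in fact more explicit than the paper's about the reduction from $U$-regret to online linear optimization over $\cG\cup\{\emptyset\}$ and about the adaptive-adversary issue (which is indeed handled by drawing fresh Laplace noise each round), but the underlying argument is the same.
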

\begin{proof}
Proof provided in Appendix \ref{app:gftpl_pf}.
\end{proof}
\begin{corollary}[Contextual FTPL Apx-Minimax]
    With probability at least $1-\delta$, using Algorithm \ref{alg:MORL-BRNR} with Contextual FTPL (Algorithm \ref{alg:GFTPL}) converges to a $\nu$-approximate solution of the minimax reward problem in $poly(\frac{1}{\gamma},\frac{1}{\delta},d,\log{|\mathcal{G}|})$.
\end{corollary}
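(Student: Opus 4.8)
The plan is to mirror the proof of the tabular corollary, replacing FTPL over $\states$ by Contextual FTPL (Algorithm~\ref{alg:GFTPL}) as the regulator's no-regret subroutine and Lemma~\ref{lem:FTPL} by Lemma~\ref{lem:GFTPL}. Theorem~\ref{thm:apx_minmax} tells us that in the repeated zero-sum game on $U$, if the learner plays $\epsilon$-approximate best responses $D_t \in \mathcal{O}_\epsilon(\cdot,\lambda_{t-1})$ and the regulator incurs cumulative regret at most $\gamma T$, then the time-averaged pair $(\hat D_T,\hat\lambda_T)$ is a $\nu$-approximate solution of the minimax reward problem (Problem~\ref{minimax_reward}) for $\nu$ on the order of $\gamma+\epsilon$; concretely, the standard best-response/no-regret bookkeeping gives $\min_{\lambda\in\Lambda}U(\hat D_T,\lambda)\ge U^\star-(\gamma+\epsilon)$, where $U^\star$ is the game value guaranteed by Sion's theorem, and symmetrically $\max_{D}U(D,\hat\lambda_T)\le U^\star+(\gamma+\epsilon)$.

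First I would instantiate the regulator with Algorithm~\ref{alg:GFTPL}, which is legitimate because $\cG$ has a separator set of size $d$ (the conjunction classes defining the $2^d$ intersectional groups do). Lemma~\ref{lem:GFTPL} then gives that, on an event of probability at least $1-\delta$, the regulator's regret $\sum_{t=1}^T U(D_t,\lambda_t)-\min_{\lambda\in\Lambda}\sum_{t=1}^T U(D_t,\lambda)$ is at most $\gamma T$ once $T\ge O\!\big(\tfrac{C^2 d^{3/2}\log|\cG|}{\gamma^2}+\tfrac{C^2}{\gamma^2}\ln\tfrac{|\cG|}{\delta}\big)$. The point that makes this composable is that the lemma's bound holds for \emph{any} — hence adaptively generated — sequence of loss vectors $c_t$, so it stays valid even though $c_t$ is produced by rolling out $D_t$ and returning a single $(s_h,a_h)$ with $h\sim\cU([H])$, which depends on the regulator's own past plays; these one-sample payoffs are unbiased for $U$, and the concentration needed to pass from them to the population objective is already absorbed into the lemma's high-probability statement.

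Next I would set $\gamma\le\nu/2$ and the best-response tolerance $\epsilon\le\nu/2$. On the $1-\delta$ event above the hypotheses of Theorem~\ref{thm:apx_minmax} are met, so Algorithm~\ref{alg:MORL-BRNR} run with Algorithm~\ref{alg:GFTPL} outputs a $\nu$-approximate solution of the minimax reward problem. Substituting $\gamma=\nu/2$ into the iteration bound from Lemma~\ref{lem:GFTPL} gives $T=O\!\big(\tfrac{C^2 d^{3/2}\log|\cG|}{\nu^2}+\tfrac{C^2}{\nu^2}\ln\tfrac{|\cG|}{\delta}\big)=\poly(1/\gamma,\log(1/\delta),d,\log|\cG|)$ for constant $C$; each of the $T$ rounds makes one call to the learner's best-response oracle $\mathcal{O}_\epsilon$ (an unconstrained RL solve on $\mathcal{M}_{\lambda_t}$), one call to the OPT oracle over $\cG$, and one length-$H$ rollout, so the total work is polynomial in the stated quantities given the two oracles. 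If the learner's oracle is itself randomized (e.g., $E^3$ or R-max), its failure probability is folded into $\delta$ by a union bound.

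I expect the only delicate point — and it is bookkeeping rather than a genuine obstacle — to be precisely this passage from the fixed-sequence, high-probability regret bound of Lemma~\ref{lem:GFTPL} to the adaptive play of the game while keeping the sampling error of the one-sample payoff estimates inside the $\nu$ budget; both are handled automatically because Lemma~\ref{lem:GFTPL} is stated uniformly over cost sequences and with an explicit $1-\delta$ guarantee, so nothing beyond the composition above is required.
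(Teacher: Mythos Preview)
Your proposal is correct and follows exactly the same approach as the paper: set $\gamma\le\nu/2$ and $\epsilon\le\nu/2$, invoke Lemma~\ref{lem:GFTPL} to certify the regulator's no-regret condition, and then apply Theorem~\ref{thm:apx_minmax}. The paper's proof is a two-line version of this; your additional discussion of adaptivity and the one-sample estimates is sound elaboration but not needed for the argument to go through.
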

\begin{proof}
For $\gamma \leq \frac{\nu}{2}$, $\epsilon \leq \frac{\nu}{2}$, Theorem \ref{thm:apx_minmax}'s no-regret conditions are satisfied in the times specified by Lemma \ref{lem:GFTPL}.
\end{proof}
\subsection{Large State Space MDPs with General Groups}
\label{sect:fairfict}

For large state spaces with general group structure, we utilize Fair and Fictitious Play (Follow the Leader (FTL) vs FTL)~\cite{brown1949some}. Fictitious Play is easier to implement as it is a deterministic algorithm and it only requires one oracle call per player. However, it is only theoretically known to converge to an equilibrium in the limit.

\begin{algorithm}
\caption{FairFictRL}
\begin{algorithmic}[1]
\STATE \textbf{Input:} bound \( C \),  best-response error \(\epsilon\),  \( \mathcal{G} \) groups, \( r \) reward function, access to MDP \( \mathcal{M} \)
\STATE Initialize \(\lambda_{0}^g = 0\) and $\hat{\lambda}_0^g = \lambda_0^g$
\STATE Initialize $\hat{D}_t$ by selecting a policy in $\Pi$
\FOR{$t = 1, \dots, T$}
    \STATE \( D_t \in \text{FTL}(\hat{\lambda}_{t-1},C) \) using $\mathcal{O}_\epsilon(\cdot,\hat{\lambda}_{t-1})$ \quad returns $\max_DU(D,\hat{\lambda}_{t-1})$
    \STATE \( \lambda_{t} = \text{FTL}(\hat{D_{t-1}},C) \) using $Best_\lambda(\hat{D}_{t-1})$\quad returns $\max_\lambda  U(\hat{D}_{t-1},\lambda)$
      \STATE \( \hat{D}_t = \frac{1}{t} \sum_{t'=0}^{t} D_{t'} \); \( \hat{\lambda}_t = \frac{1}{t} \sum_{t'=0}^{t} \lambda_{t'} \)
\ENDFOR
 \STATE \textbf{return} \( (\hat{D}_T, \hat{\lambda}_T) \)
\end{algorithmic}
\label{alg:FairFictRL}
\end{algorithm}
\begin{lemma}[FairFict Convergence]
    \cite{robinson1951iterative} FairFictPlay converges in the limit to an equilibrium.
\end{lemma}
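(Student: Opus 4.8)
The plan is to recognize Algorithm~\ref{alg:FairFictRL} as an instance of classical fictitious play (Brown--Robinson dynamics, \cite{brown1949some}) on a \emph{finite} two-player zero-sum game and then invoke Robinson's convergence theorem \cite{robinson1951iterative} essentially as a black box.

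\textbf{Step 1: reduce to a bilinear game with extreme-point best responses.} The payoff $U(D,\lambda)$ is affine in $D$ (it is an expectation over $\pi\sim D$) and affine in $\lambda$ (for fixed $\pi$ it equals $V^{tot}(\pi)/H$ plus a linear form $\sum_g \lambda^g(\cdots)$). Both players update by Follow-the-Leader, i.e., each best-responds to the running average of the opponent's past plays: the learner plays $D_t\in\argmax_D U(D,\hat\lambda_{t-1})$ and the regulator plays $\lambda_t\in\argmax_\lambda U(\hat D_{t-1},\lambda)=Best_\lambda(\hat D_{t-1})$. By Definition~\ref{def:reg_best} the regulator's response always lies in the finite vertex set $\mathcal V_\Lambda:=\{0\}\cup\{Ce_k:k\in\mathcal G\}$ of $\Lambda$, and a learner's best response to the scalarized MDP $\mathcal M_{\hat\lambda_{t-1}}$ can be taken to be a deterministic Markov policy, i.e.\ $D_t=\delta_{\pi_t}$ a point mass. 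When $\mathcal S,\mathcal A,H$ are finite there are finitely many such policies; letting $\Pi_{\det}$ be this finite set, define the finite payoff matrix $A$ with rows indexed by $\pi\in\Pi_{\det}$, columns by $v\in\mathcal V_\Lambda$, and entries $A_{\pi,v}=U(\delta_\pi,v)$. Since every play is a pure best response and $\hat D_t,\hat\lambda_t$ are their uniform averages, the sequence $(\hat D_t,\hat\lambda_t)$ is exactly the empirical-frequency profile generated by fictitious play on $A$ (the non-vertex iterate $\hat\lambda_{t-1}$ that the learner responds to is just the mixed strategy induced by the column player's frequencies, which is all bilinearity requires).

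\textbf{Step 2: invoke Robinson's theorem.} The game $A$ is finite and zero-sum; since $\Delta\Pi_{\det}$ and $\Lambda$ are compact and convex and $U$ is affine in each argument, Sion's theorem \cite{sion1958general} gives it a value $v^\star=\min_\lambda\max_D U(D,\lambda)=\max_D\min_\lambda U(D,\lambda)$. Robinson \cite{robinson1951iterative} shows that under fictitious play $\max_D U(D,\hat\lambda_t)\to v^\star$ and $\min_\lambda U(\hat D_t,\lambda)\to v^\star$, so the duality gap
\begin{equation*}
  \max_{D\in\Delta\Pi} U(D,\hat\lambda_t)\;-\;\min_{\lambda\in\Lambda} U(\hat D_t,\lambda)\;\longrightarrow\;0 .
\end{equation*}
Hence $(\hat D_t,\hat\lambda_t)$ is a $\nu_t$-approximate minimax equilibrium with $\nu_t\to 0$, and by compactness of $\Delta\Pi$ and $\Lambda$ every limit point of $(\hat D_t,\hat\lambda_t)$ is an exact minimax equilibrium of the game defined by $U$; via the equivalence between the minimax problem and Problem~\ref{minimax_reward} noted earlier, this is a minimax-optimal (distribution over) policies, which is the claimed convergence.

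\textbf{Main obstacle and caveats.} The delicate point is the finiteness that Robinson's theorem requires: for finite-horizon MDPs with finite $\mathcal S,\mathcal A$ (in particular the tabular setting) Step~1 goes through verbatim, but for genuinely large or continuous state spaces one must instead appeal to a continuous-strategy version of fictitious-play convergence for zero-sum games, using that $U$ is bilinear and $\Delta\Pi,\Lambda$ are compact and convex (alternatively, restrict the learner to a finite class of deterministic responses). A second caveat is that Algorithm~\ref{alg:FairFictRL} invokes the $\epsilon$-approximate oracle $\mathcal O_\epsilon$: with exact best responses one gets the exact-equilibrium conclusion above, while $\epsilon$-approximate responses turn this into $\epsilon$-approximate fictitious play, whose empirical profiles converge only to the set of $O(\epsilon)$-approximate equilibria (recovering exact convergence if $\epsilon\to 0$). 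These points would be flagged in a remark rather than affecting the headline statement, which is stated (and attributed to \cite{robinson1951iterative}) for the exact, finite setting.
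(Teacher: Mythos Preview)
Your proposal is correct and follows the same approach as the paper: invoke Robinson's theorem \cite{robinson1951iterative} for fictitious play in finite zero-sum games. In fact you supply more detail than the paper does---the paper simply cites Robinson as a black box and remarks that \cite{daskalakis2014counter} shows convergence can be exponentially slow---whereas you spell out the reduction to a finite bilinear matrix game (via deterministic policies and the vertex set of $\Lambda$) and flag the caveats about large state spaces and the $\epsilon$-approximate oracle, which the paper leaves implicit.
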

\citet{robinson1951iterative} showed that fictious play converges in the limit to equilibrium. \citet{daskalakis2014counter} showed that the convergence can be exponentially slow in the worst-case, at least with adversarially selected tie-breaking rules. Nevertheless, in non-adversarial settings it appears to be a practical algorithm. We use this algorithm in our experiments and also empirically demonstrate that it performs well, despite other algorithms' stronger theoretical guarantees.
\section{Experimental Results}
\label{sect:experiments}

The experimental section focuses on an evaluation of the FairFictRL algorithm. 

FairFictRL is a very easy to implement version of our framework and even though it only has asymptotic guarantees, we will demonstrate that it can quickly converge to desirable solutions in practice.

\subsection{MDP construction}
\begin{figure}[H]
    %\vspace{-6pt}
    \centering
    \subfloat[]{\includegraphics[height=4cm, trim={5cm 3cm 5cm 3cm},clip]{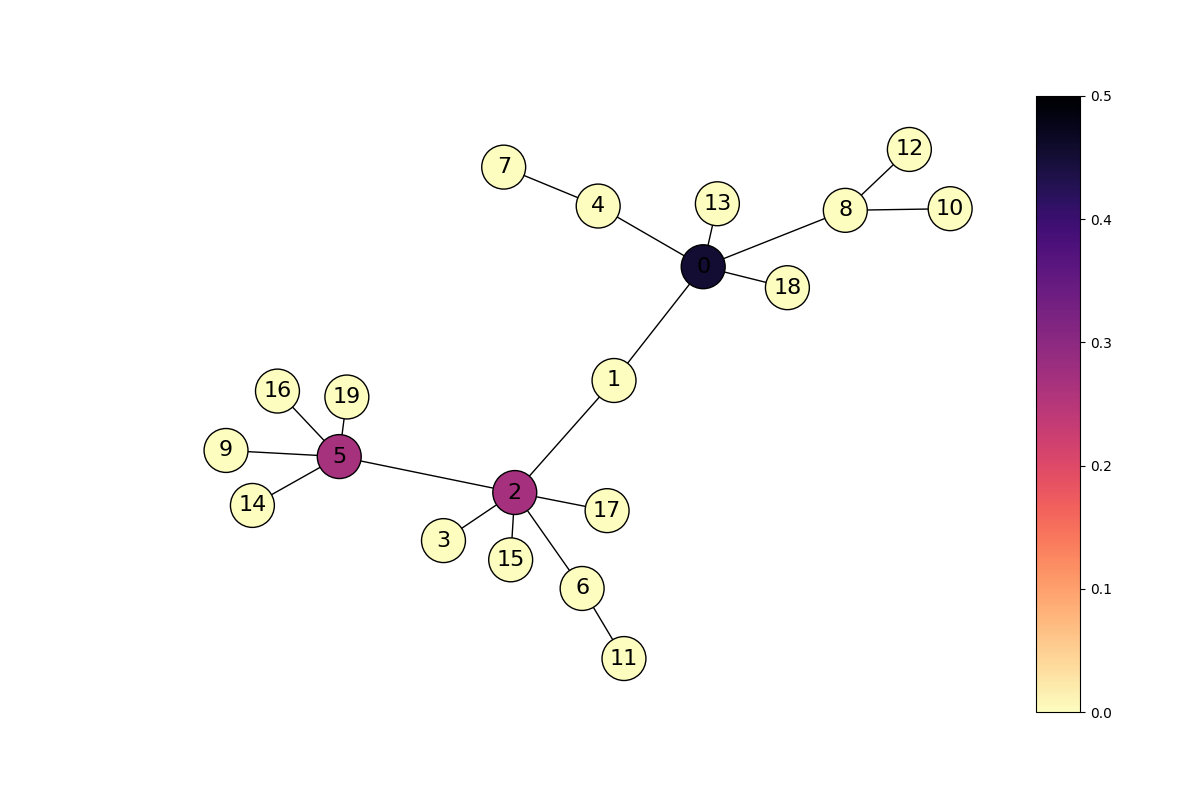} \label{fig:opt_occupancy}} 
    \hspace{12pt}
    \subfloat[]{\includegraphics[height=4cm, trim={5cm 3cm 2cm 3cm},clip]{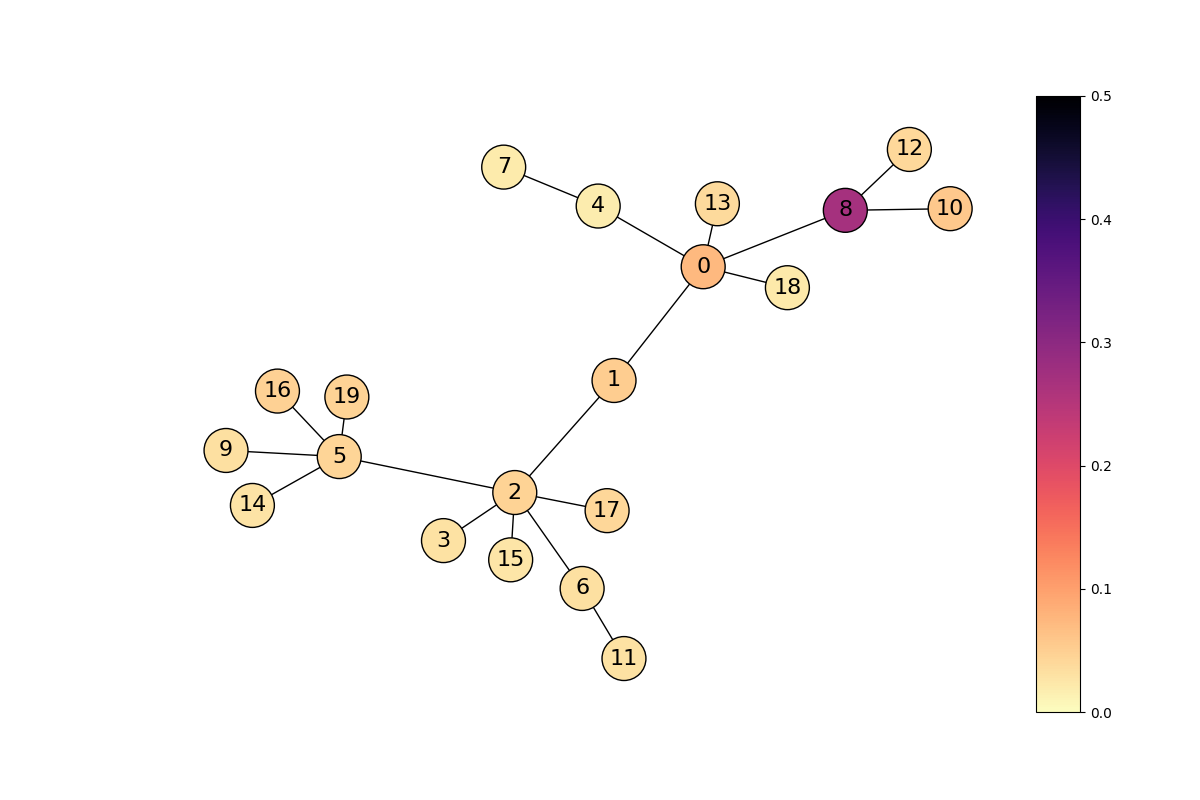} \label{fig:morl_occupancy}}
    \caption{Depiction of the Barabási-Albert graph we use as an MDP. Nodes correspond to states and actions are deterministic moves to other nodes. Every state also has a self-loop action to remain in place. The groups are assigned depending on the number of outgoing edges. All nodes with $1$ or $2$ outgoing edges are in Group $0$, nodes with $3$ outgoing edges are in Group $1$ and all others are in Group $2$. Rewards $r(s, a)$ are assigned as $0.1$ for all $s$ in Group $1$, $0.2$ for all nodes in Group $1$ and $0.3$ for nodes in Group $2$. The start state is a random node in the graph in every episode. Figure (a) shows the occupancy distribution of the (non-fair) optimal policy. As we can see, the non-fair policy's goal is to quickly get to one of the nodes with $5$ edges (Group 2) and stay there indefinitely to accumulate reward. After running our FairFictRL algorithm in Figure (b), the distribution over nodes is almost evenly spread across all nodes. The only outlier is node $8$, as it is the only node that belongs to Group $1$ and thus requires a large visitation number to satisfy our constraints.}
    \label{fig:graphs}
\end{figure}
Preferential attachment graphs are a commonly used model of social networks exhibiting a ``rich get richer'' distribution of connectivity. We construct a multi-objective RL task based on such graphs using the Barabási-Albert model~\citep{barabasi2016network}.
in which nodes with higher degree tend to have a higher probability of forming connections with newer nodes on the network. 
In rounds, one new node is added to the graph until the final number of nodes is reached. Each new node forms $n_e$ new connections with probability proportional to the degree of existing nodes.

Using this model, we construct the following graph MDP. The state space corresponds to the nodes on the graph and actions to edge selection on the current node. There is a scalar reward of $r(s, a)$ each time an edge is selected from a node. Each node is assigned to possibly several intersecting groups based on its degree. A learner is trying to traverse the graph with the goal of maximizing total cumulative reward, while maintaining that all groups receive at least average reward $\alpha/H$. The graph we consider is depicted in Figure~\ref{fig:graphs}.

\subsection{FairFictRL Evaluation}

First, we run FairFictRL with a minimum value of average group reward  that we want to achieve of $\frac{\alpha}{H} = 0.04$. Given the reward stucture in the graph, it is not the case that all groups will obtain this reward under the optimal policy. We use standard value iteration as an oracle for the learner; for higher-dimensional problems, this could be substituted with a deep RL algorithm. The regulator runs the policy returned by the learner for $500$ episodes to obtain an average reward estimate and best responds according to Definition~\ref{def:reg_best} with $ Ce_{k}=25$. The results are shown in Figure~\ref{fig:avg_reward}.
\vspace{-.7em}
\begin{figure}[H]
    \centering
    \includegraphics[width=0.8\linewidth,trim={0 0 0 2cm}, clip]{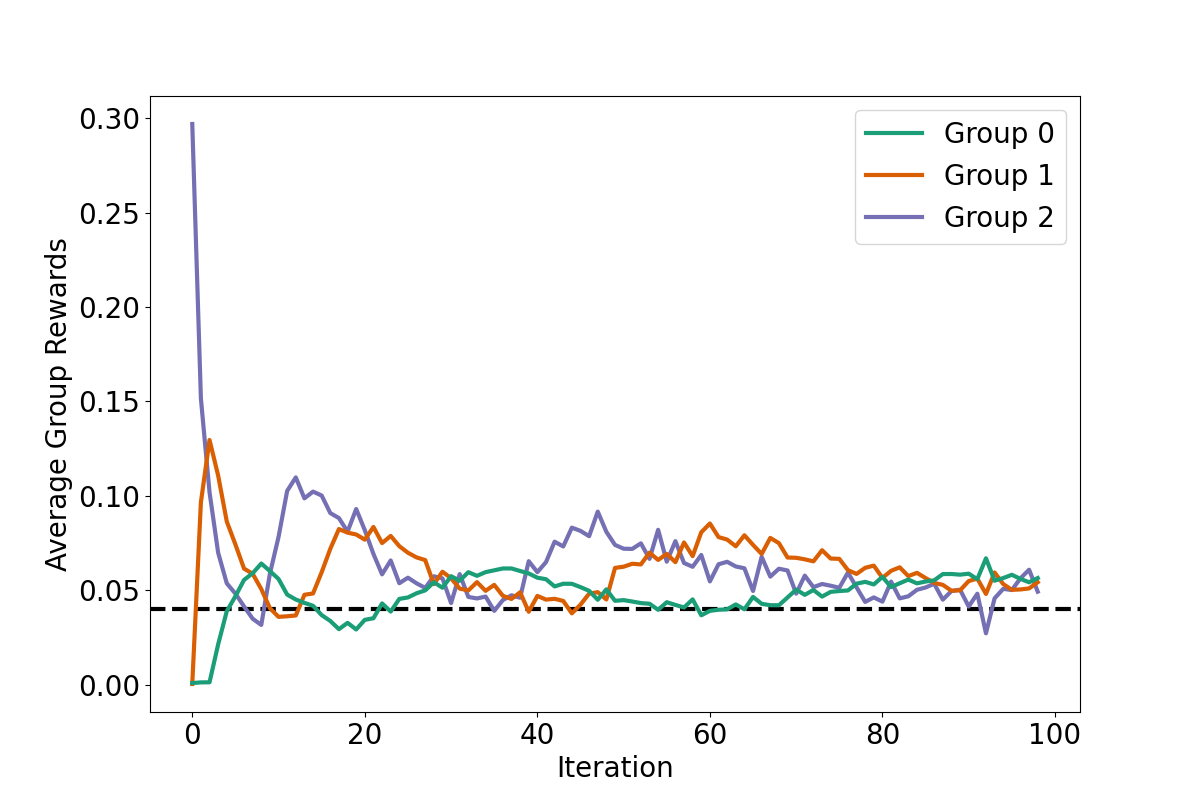}
    \caption{Average reward to groups during a run of FairFictRL on the MDP depicted in Figure~\ref{fig:graphs} for $\frac{\alpha}{H}=0.04$. The optimal (non-fair) behavior in the MDP is to move to any Group $2$ node and stay there indefinitely, which achieves at most $0.3$ average reward. As we run FairFictRL, the learned mixture policy quickly ensures that all groups obtain at least $\frac{\alpha}{H}$ average reward.}
    \label{fig:avg_reward}
\end{figure}
 As we can see, after only a few iterations, all groups obtain $0.04$ average reward, which is consistently upheld throughout the execution of the algorithm. Furthermore, we demonstrate in Figure~\ref{fig:graphs} that this corresponds to a more evenly distributed occupancy of the graph.

Finally, we analyze the Pareto frontier between fairness and maximum cumulative reward that can be obtained by varying values of $\alpha$ in Figure~\ref{fig:bar_pareto}. The results highlight that by increasing the minimum average reward that we enforce, our algorithm equalizes the rewards between all groups more an more. This comes at the expense of total reward $V^{tot}$ which highlights the trade-off between fairness and optimality.

\begin{figure}[H]
    % \vspace{-13pt}
    \centering
    \includegraphics[width=0.75\linewidth,trim={0 0 0 0cm}, clip]{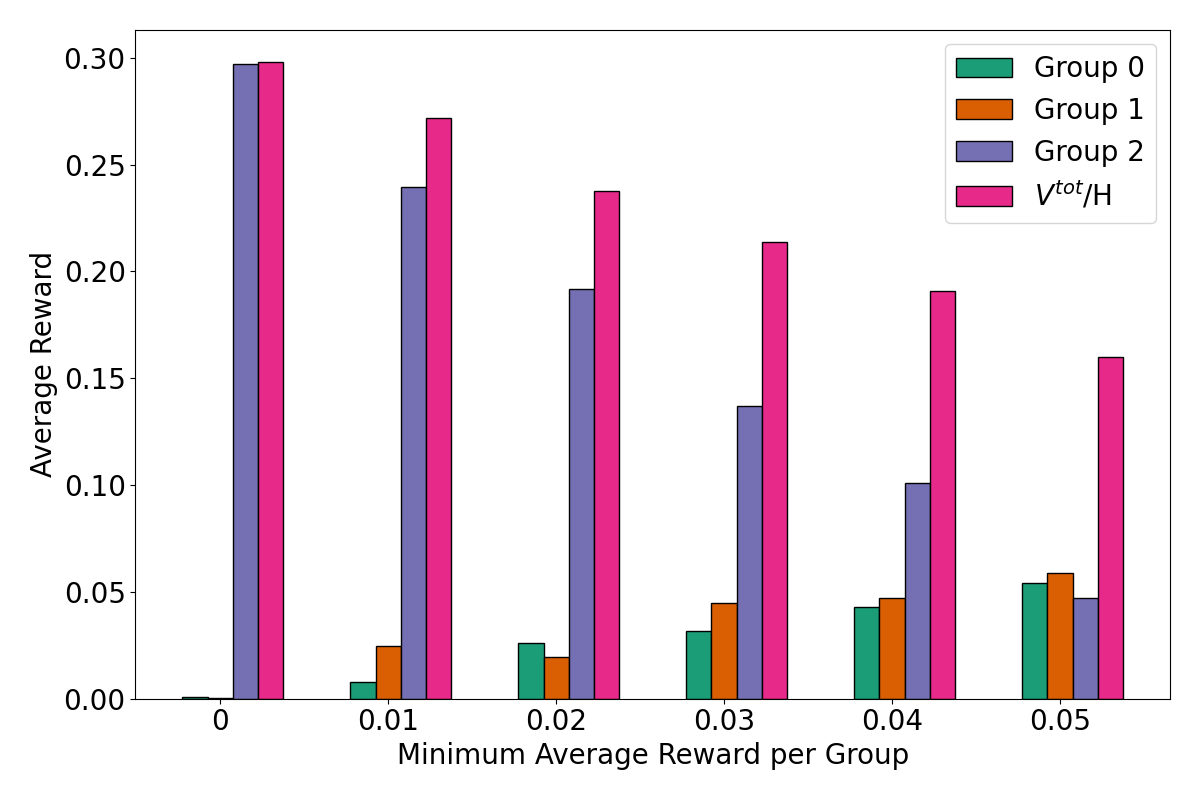}%{figures/barplot_taller.png}
    %\vspace{-1em}
    \caption{Total average reward compared to per-group average reward for varying $\alpha$. As the fairness constraint increases, total reward decreases while group rewards equal out until eventually all groups obtain the same average reward.}
    \label{fig:bar_pareto}
    %\vspace{-1em}
\end{figure}

\section{Conclusion and Future Work}
We consider MDPs with possibly an exponentially large number of intersecting groups over states. For solving a class of corresponding constrained or multi-objective optimization problems over these very large number of objectives, we provide several oracle-efficient algorithms. Namely, we provide algorithms for tabular MDPs and large state-spaces with separator sets over groups. In these approaches, we reduce the learner's problem to that of standard RL with one scalarized objective to learn a near-optimal policy. We reduce the regulator's problem to one of linear optimization. We also provide a version of Fictitious Play for large state-spaces with general group functions that has provable asymptotic guarantees. In practice, our experiments showed that they converge quickly for the preferential attachment graphs provided in our paper.

\textbf{Limitations and Future Work}~~
Of course, each of these reductions relies on our ability to efficiently implement our Best Response and Optimization oracles. Moreover, many of our results are reliant on boolean group structure. It would be interesting to extend our results beyond exponentially large group classes to infinite group function classes. We also hope to apply these results to the setting of Reinforcement Learning with Human Feedback, especially with access to more fine-grained subgroup data. 
\section*{Acknowledgments}
This research was partially supported by the Simons collaboration on algorithmic fairness, NSF grants FAI-2147212 and CCF-2217062, the Army Research Office under MURI award W911NF201-0080, the DARPA Triage Challenge under award HR00112420305, and by the University of Pennsylvania ASSET center. Any opinions, findings, and conclusion or recommendations expressed in this material are those of the authors and do not necessarily reflect the view of DARPA, the Army, or the US government. We would also like to thank Alejandro Ribeiro for his discussions with us.

\newpage

\bibliography{arxiv_fair_rl}
\bibliographystyle{abbrvnat}

%%%%%%%%%%%%%%%%%%%%%%%%%%%%%%%%%%%%%%%%%%%%%%%%%%%%%%%%%%%%%%%%%%%%%%%%%%%%%%%
%%%%%%%%%%%%%%%%%%%%%%%%%%%%%%%%%%%%%%%%%%%%%%%%%%%%%%%%%%%%%%%%%%%%%%%%%%%%%%%
% APPENDIX
%%%%%%%%%%%%%%%%%%%%%%%%%%%%%%%%%%%%%%%%%%%%%%%%%%%%%%%%%%%%%%%%%%%%%%%%%%%%%%%
%%%%%%%%%%%%%%%%%%%%%%%%%%%%%%%%%%%%%%%%%%%%%%%%%%%%%%%%%%%%%%%%%%%%%%%%%%%%%%%
\newpage
\appendix
\onecolumn
\section{Appendix}
\subsection{Proof of Theorem \ref{thm:apx_minmax}}
\label{app:proof_apx_minmax}

We want to show that $\frac{1}{T} \sum \lambda_t, \frac{1}{T} \sum D_t$ converge to an approximate minimax solution of the game if we have that
\[
     \sum_{t=1}^{T} \mathbb{E}\left[ U(\hat{D}_t, \hat{\lambda}_t) \right] - \min_{D\sim \mathbb{1}_{\Pi}}\sum_{t=1}^{T} \mathbb{E}\left[ U(D, \hat{\lambda}_t) \right] \leq BR_{L}(T,C,\epsilon)
\]
and that 
\[
    \max_{\lambda \in \Lambda} \sum_{t=1}^{T} \mathbb{E} \left[ U(\hat{D}_t, \lambda) \right] - \sum_{t=1}^{T} \mathbb{E} \left[ U(\hat{D}_t, \hat{\lambda}_t) \right] \leq reg_{A}(T,C,\gamma)
\]
 which are sublinear in $T$.
We want to show that:\\
First for any $\lambda$,
\begin{align}
\text{want to show: } U(\hat{D}_T, \lambda) &\geq U(\hat{D}_T, \hat{\lambda}_T) - \frac{1}{T}reg_{A}(T,C,\gamma) -\epsilon\\
  U(\hat{D}_T, \lambda)  &= U\left( \frac{1}{T} \sum_{t=1}^{T} D_t, \lambda \right) \quad (\text{linear in } D) \\
    &= \frac{1}{T} \sum_{t=1}^{T} U(D_t, \lambda)\\
    &\geq \frac{1}{T} \sum_{t=1}^{T} \left( U(D_t, \lambda_t)\right) - \frac{1}{T}reg_{A}(T,C,\gamma)  \quad (\text{because no-regret}) \\
    &\geq \frac{1}{T} \sum_{t=1}^{T} U(\hat{D}_t, \lambda_t) -  \frac{1}{T}reg_{A}(T,C,\gamma)-\epsilon \quad (\text{because apx-best response}) \\
    &= U(\hat{D}_T, \hat{\lambda}_T) -  \frac{1}{T}reg_{A}(T,C,\gamma)-\epsilon \quad (\text{because linearity}) \\
\end{align}
Second for any $D$,
\begin{align}
\text{want to show: } U(D, \hat{\lambda}_T) &\leq U(\hat{D}_T, \hat{\lambda}_T) +\frac{1}{T}reg_{A}(T,C,\gamma)+\epsilon \\
  U(D, \hat{\lambda}_T)  & = \frac{1}{T} \sum_{t=1}^{T} U(D, \lambda_t) \\
    &\leq \frac{1}{T} \sum_{t=1}^{T} U(D_t, \lambda_t) +\epsilon\quad (\text{because } D_t \text{ is apx-best response}) \\
    &\leq \frac{1}{T} \sum_{t=1}^{T} U(D_t, \hat{\lambda}_T)  +  \frac{1}{T}reg_{A}(T,C,\gamma)+\epsilon\quad \text{(because no-regret)}\\
    &= U(\hat{D}_T, \hat{\lambda}_T) +  \frac{1}{T}reg_{A}(T,C,\gamma)+\epsilon \quad \text{(because linearity)}
\end{align}
Thus, for any \(\lambda, D\):
\begin{align*}
U(D, \hat{\lambda}_T) -  \frac{1}{T}reg_{A}(T,C,\gamma)-\epsilon \leq U(\hat{D}_T, \hat{\lambda}_T) \leq   U(\hat{D}_T, \lambda)  +  \frac{1}{T}reg_{A}(T,C,\gamma)+\epsilon
\end{align*}
\\

So if \(\nu \geq  \frac{1}{T}reg_{A}(T,C,\gamma) +\epsilon \), then we have gotten an approximate minimax solution using 
\cite{freund1996game}.
Note that this algorithm makes $O(T)$ best-response oracle queries and $O(T)$ Lin-OPT queries for the learner and regulator respectively.
\subsection{Proof of Lemma \ref{lem:FTPL}}
\label{app:tabular_pf}
\begin{proof}
\cite{kalai2005efficient} and \cite{LearningInGames}, FTPL's regret can be bounded by

\begin{equation*}
    \begin{aligned}
        \Big|\frac{1}{T}\sum_{t=1}^T (r(s_t,a_t))g_t(s_t)-\min_{g \in \mathcal{G}}\frac{1}{T}\sum_{t=1}^T (r(s_t,a_t))g(s_t)\Big| \leq \frac{\sqrt{8|S|^3}}{\sqrt{T}}
    \end{aligned}
\end{equation*}
\\
For For $T \geq \frac{8C^2|S|^6}{\gamma_1^2}$, we have that 
\begin{equation*}
    \begin{aligned}
        \Big|\frac{1}{T}\sum_{t=1}^T (r(s_t,a_t))g_t(s_t)-\min_{g \in \mathcal{G}}\frac{1}{T}\sum_{t=1}^T (r(s_t,a_t))g(s_t)\Big| \leq \frac{\gamma_1}{C}
    \end{aligned}
\end{equation*}
\\
Recall the definition of $V^g(\pi_t) = H\E_{t\in[H]}\E_{P,\pi,\mu}[(r(s_t,a_t)g(s_t)]$. Define $Y_t = \frac{1}{T}((r(s_t,a_t))g_t(s_t)-(r(s_t,a_t))-\frac{1}{TH}((V^{g_t}(\pi_t))-(V^{g}(\pi_t))).$ We can show that the difference sequence $\{Y_t\}_{t=1}^T$ forms a Martingale Difference Sequence. Therefore $\sum_t Y_t$ also forms a Martingale.
Using Azuma-Hoeffding's inequality, for any $\lambda \in \Lambda$
\begin{equation*}
    \begin{aligned}
        Pr\Big(\Big|\frac{1}{T}(r(s_t,a_t))g_t(s_t)-(r(s_t,a_t)))g(s_t))-\frac{1}{TH}((V^{g_t}(\pi_t))-(V^{g}(\pi_t)))\Big| \geq \frac{\gamma_2}{C}\Big)  \leq 2\exp{\frac{-2T\gamma_2^2}{C^2}}
    \end{aligned}
\end{equation*}
Union bounding over all groups, we want 
$$2|\mathcal{G}|\exp{\frac{-2T\gamma_2^2}{C^2}}\leq \delta.$$

Therefore, for $T \geq \frac{C^2}{2 \epsilon_2^2}\ln(\frac{2|\mathcal{G}|}{\delta})$, with probability at least $1-\delta$, for all groups $g \in \mathcal{G}$,

\begin{equation*}
    \begin{aligned}
        \Big|\frac{1}{T}((r(s_t,a_t))g_t(s_t)-\min_{g \in \G} (r(s_t,a_t))g(s_t))-\frac{1}{TH}((V^{g_t}(\pi_t))-\min_{g \in \G}(V^{g}(\pi_t)))\Big| \leq \frac{\gamma_2}{C}
    \end{aligned}
\end{equation*}
Therefore, 
\begin{equation*}
    \begin{aligned}
        \Big|\frac{1}{TH}\sum_{t=1}^T (V^{g_t}(\pi_t))-\min_{g \in \G}\frac{1}{TH}\sum_{t=1}^T (V^{g}(\pi_t))\Big| \leq \frac{\gamma_1+\gamma_2}{C}
    \end{aligned}
    \end{equation*}

Notice that 
\begin{align*}
\frac{1}{T}\sum_{t=1}^T (r(s_t,a_t)g_t(s_t)-)&-\min_{g\in \G}\frac{1}{T}\sum_{t=1}^T (r(s_t,a_t)g_t(s_t) )\\
= (\frac{1}{T}\sum_{t=1}^Tr(s_t,a_t)+r(s_t,a_t)g_t(s_t)-\frac{\alpha}{H})  &-(\min_{g\in \G}\frac{1}{T}\sum_{t=1}^T r(s_t,a_t)+r(s_t,a_t)g(s_t) -\frac{\alpha}{H})\\
\end{align*} 
Similarly, 
\begin{equation*}
    \begin{aligned}
        \Big|\frac{1}{TH}\sum_{t=1}^T (V^{g_t}(\pi_t))&-\min_{g \in \G}\frac{1}{TH}\sum_{t=1}^T (V^{g}(\pi_t))\Big| \\
        = (\frac{1}{TH}\sum_{t=1}^T \frac{1}{H}V^{tot}(\pi_t)+V^{g_t}(\pi_t)-\alpha)&-(\min_{g \in \G}\frac{1}{TH}\sum_{t=1}^T \frac{1}{H}V^{tot}(\pi_t)+V^{g}(\pi_t)-\alpha) \\
        & \leq \frac{\gamma_1+\gamma_2}{C}
    \end{aligned}
    \end{equation*}
To see the final connection with the objective $U$, we need to introduce $\lambda$. Observe that we could have defined the action space of FTPL in terms of $\lambda \in \Lambda$, where for every $g$, $\lambda^g \in \mathbb{R}^{|S|}$ and $\lambda^g(s)= Cg(s).$ That is, every selection of $g \in \G$ has a corresponding $\lambda^g \in \Lambda$ which puts weight $C$ on the dimension corresponding to the $gth$ group. It is important to distinguish different representations of $\lambda$. $\lambda \in \Lambda$ is a $|\G|$-dimensional vector over groups. $\lambda^g$, however, is an $|S|$-dimensional vector over the state space for the selected group $g$. We acknowledge that we overload this notation because $\lambda^g$ without mention of a state can refer to the weight placed by the regulator on that group. Hence, for a given selected $\lambda \in \Lambda$, we use the notation $\lambda^g(s)$ to denote the corresponding group selected out of that regulator weight assignment and then its corresponding values over the state space. This notation makes explicit the implicit relationship between group selection and $\lambda$ selection. 
We can scale the above terms $r(s_t,a_t)g_t(s_t)-\frac{\alpha}{H}$ and $r(s_t,a_t)g(s_t)-\frac{\alpha}{H}$ by the corresponding $\lambda^{g_t}(s_t)$ or $\lambda^{g}(s_t)$ value. To get $r(s_t,a_t)\lambda^{g_t}(s_t)-\frac{C\alpha}{H}$ and $r(s_t,a_t)\lambda^g(s_t)-\frac{C\alpha}{H}$ (notice that the vector $\lambda \in \R^{|\G|}$ is $C$ times a one-hot encoding).
Similarly, 
\begin{equation*}
    \begin{aligned}
        (\frac{1}{H}V^{tot}(\pi_t)+\sum_{g \in \G} \lambda^{g_t} (\frac{1}{TH}\sum_{t=1}^T (V^{g_t}(\pi_t)-\alpha)))&-(\frac{1}{H}V^{tot}(\pi_t)+\min_{g \in \G}\sum_{g \in \G} \lambda^{g}(\frac{1}{TH}\sum_{t=1}^T (V^{g}(\pi_t)-\alpha))) \\
        \leq \gamma_1&+\gamma_2
    \end{aligned}
    \end{equation*}
    Finally,
 $$\frac{1}{T}\sum_{t=1}^H U(D_t,\lambda_t)-\min_{\lambda \in \Lambda}\frac{1}{T}\sum_{t=1}^H U(D_t,\lambda) \leq \gamma$$
 for $\gamma_1,\gamma_2 = \frac{\gamma}{2}$ and noting that $\pi_t$ is the policy specified by $D_t$.
\end{proof}
\subsection{Proof of Lemma \ref{lem:GFTPL}}
\label{app:gftpl_pf}
\begin{proof}
We utilize \cite{syrgkanis2016efficient} and \cite{dudik2020oracle} for our regret bound and construction.
\cite{syrgkanis2016efficient} prove that for Laplace noise $\lap(\rho)$ and groups $\cG$ with separator $X$ of size $d = |X|$, their contextual FTPL algorithm enjoys regret
\begin{align*}
 \Big|\frac{1}{T}\sum_{t=1}^T (r(s_t,a_t))g_t(s_t)-\min_{g \in \mathcal{G}}\frac{1}{T}\sum_{t=1}^T (r(s_t,a_t))g(s_t)\Big|  
 &\leq O\left(\frac{\rho d}{T}\sum_{t=1}^T\E\left[\max_{g\in \cG}(g(s_t)r(s_t,a_t))^2\right] + \frac{\sqrt{d}\log(|\cG|)}{\rho T} \right) \\
 &\leq O\left(\frac{d^{3/4}\sqrt{\log|G|}}{\sqrt{T}}\right)
\end{align*}
for optimal choice of $\rho$, and runs in time poly$(d,T,\log(|\cG|))$. 
Therefore, for the below condition to hold, we would need $T \geq \frac{C^2d^{3/2}\log{|\G|}}{\gamma_1^2}$.

We will use similar notation and arguments to the proof of Lemma \ref{lem:FTPL}.
     For $T \geq \frac{C^2d^{3/2}\log{|\cG|}}{\gamma_1^2},$
    \begin{equation*}
        \begin{aligned}
             \Big|\frac{1}{T}\sum_{t=1}^T (r(s_t,a_t))g_t(s_t)-\min_{g \in \mathcal{G}}\frac{1}{T}\sum_{t=1}^T (r(s_t,a_t))g(s_t)\Big| \leq \frac{\gamma_1}{C}
        \end{aligned}
    \end{equation*}
    Once again, we use a similar argument to the proof of Lemma \ref{lem:FTPL},
    Using Azuma-Hoeffding's inequality for a given $\lambda$,
\begin{equation*}
    \begin{aligned}
       Pr\Big(\Big|\frac{1}{T}(r(s_t,a_t))g_t(s_t)-(r(s_t,a_t)))g(s_t))-\frac{1}{TH}((V^{g_t}(\pi_t))-(V^{g}(\pi_t)))\Big| \geq \frac{\gamma_2}{C}\Big)\leq 2\exp{\frac{-2T\gamma_2^2}{C^2}}
    \end{aligned}
\end{equation*}
Union bounding over all groups, we want 
$$2|\mathcal{G}|\exp{\frac{-2T\gamma_2^2}{C^2}}\leq \delta.$$

Therefore, for $T \geq \frac{C^2}{2 \gamma_2^2}\ln(\frac{2|\mathcal{G}|}{\delta})$, with probability at least $1-\delta$, for all groups $g\in \mathcal{G}$,
\begin{equation*}
    \begin{aligned}
        \Big|\frac{1}{T}((r(s_t,a_t))g_t(s_t)-\min_{g \in \G} (r(s_t,a_t))g(s_t))-\frac{1}{TH}((V^{g_t}(\pi_t))-\min_{g \in \G}(V^{g}(\pi_t)))\Big| \leq \frac{\gamma_2}{C}
    \end{aligned}
\end{equation*}
Therefore, 
\begin{equation*}
    \begin{aligned}
        \Big|\frac{1}{TH}\sum_{t=1}^T (V^{g_t}(\pi_t))-\min_{g \in \G}\frac{1}{TH}\sum_{t=1}^T (V^{g}(\pi_t))\Big| \leq \frac{\gamma_1+\gamma_2}{C}
    \end{aligned}
    \end{equation*}
Finally using a very similar argument to the proof of Lemma \ref{lem:FTPL} to relate to $\lambda$,
 $$\frac{1}{T}\sum_{t=1}^H U(D_t,\lambda_t)-\min_{\lambda \in \Lambda}\frac{1}{T}\sum_{t=1}^H U(D_t,\lambda) \leq \gamma$$
 for $\gamma_1,\gamma_2 = \frac{\gamma}{2}.$
\end{proof}

\subsection{Error Cancellations}
\label{app:err_canc}
Recall the constrained RL problem: 
\begin{equation*}
    \begin{aligned}
    \max_{D \in \Delta \Pi} \quad & \E_{\pi \sim D}[V^{tot}(\pi)] \quad \\
    \text{subject to } &\E_{\pi \sim D} \left[V^{g}(\pi) \right] \geq \alpha, \quad \forall g \in \mathcal{G}
\end{aligned}
\end{equation*}
In this section, we will use the shorthand $AV^{tot}(\pi)$ to denote the average reward rather than $V^{tot}(\pi)$ and $AV^g(\pi)$ for the average reward to group $g$. Now, we will also define $AV^{tot}(D)=\E_{\pi \sim D}[AV^{tot}(\pi)]$ and $AV^{g}(D)=\E_{\pi \sim D}[AV^{g}(\pi)].$

Now define $h^g(D) = \frac{\alpha}{H}-AV^{g}(D)$ and $f(D) = AV^{tot}(D).$ Now, we can rewrite the constrained RL problem as:
\begin{equation*}
    \begin{aligned}
    \max_{D \in \Delta \Pi} \quad & f(D) \quad \\
    \text{subject to } & h^g(D) \leq 0, \quad \forall g \in \mathcal{G}
\end{aligned}
\end{equation*}

Now, notice that $h^g(D)$ is affine (and therefore convex) in $D$ and $0$ is a constant so $\max\{h^g(D),0\}$ is convex in $D$. Therefore, we can rewrite our optimization problem as: 
\begin{equation*}
    \begin{aligned}
    \max_{D \in \Delta \Pi} \quad & f(D) \quad \\
    \text{subject to } & \max\{h^g(D),0\} \leq 0, \quad \forall g \in \mathcal{G}
\end{aligned}
\end{equation*}
Notice that the feasibility sets over $D$ are the same between these two problems. Now, we can write the following Lagrangian. 
$$\mathcal{L}(D,\lambda)=f(D)-\sum_{g \in \mathcal{G}}\lambda^g( \max\{h^g(D),0\}).$$

The conditions of Sion's minimax theorem still hold for this Lagrangian. Observe that for this objective, when considering the notion of average violations, one does not face the issue of error cancellations because the max term is always nonnegative. For the learner, we can define a new scalarized reward function in terms of the regulator's selection of $\lambda$ for which it can utilize the standard approximate best response oracle. Observe now however, that $k^g(D)$ is now convex in $D$ rather than linear, but we still need to provide estimates of $\max\{h^g(D),0\}.$

 Define $k(x) = \max\{x,0\}.$ We want to bound $|k(h^g(D))-k(\hat{h}^g(D))|.$ Notice, that to get our good estimate of $\hat{h}^g(D_t))$, we will sample from this procedure and average the following: sample a policy from $D_t$ and generate a trajectory, then uniformly sample a time step $h$ and receive the corresponding $s_h,a_h,r(s_h,a_h).$ Now, we will average a sufficiently large number of terms$\frac{\alpha}{H}-r(s_h,a_h)g(s_h)$ to get an estimate for $\hat{h}^g(D)$ within $\epsilon'$ using Hoeffding's inequality. Union-bounding over groups, we want $2|\mathcal{G}|e^{-2n\epsilon'^2} \leq \delta'$, which requires $n \geq \frac{1}{\epsilon'^2}\frac{\ln (2 |\mathcal{G}|)}{\delta'}.$ We would like $\epsilon' \in O(\frac{1}{C\sqrt{T}})$ (because of its contribution to overall regret). Therefore, we would like  $n \geq T\frac{\ln (2 |\mathcal{G}|)}{\delta'}.$
Now assume that with probability at least $1-\delta'$, for all $g \in \G$, $(\hat{h}^g(D)-h^g(D)) \leq \epsilon'$.

It is important to note however, that since the objective is now concave in $D$, rather than linear. 
\begin{align*}
    r_{\lambda}(s_t, a_t) &=  \left( r(s_t, a_t) - \sum_{g \in \mathcal{G}} \lambda^{g}  \max\{\hat{h}^g(D),0\} \right) \enspace.   
\end{align*}
Notice that this reward has some error from the "true" scalarized reward and therefore, our learner will be approximate best responding. 

\begin{theorem}[Approximate Min-Max Error Cancellation]
For a sequence of distributions over best-response policies $(D_1,...,D_T)$ and Lagrangian weights from no-regret updates $(\lambda_1,...,\lambda_T)$ maintained by the learner and regulator, MORL-BRNR [Algorithm \ref{alg:MORL-BRNR}] returns a $\nu$-approximate solution $(\hat{D},\hat{\lambda})$ of the minimax rewards problem defined by $\Lag$.
%include poly-time discussion
\label{thm:apx_minmax_err_canc}
\end{theorem}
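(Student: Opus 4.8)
Theorem~\ref{thm:apx_minmax_err_canc} should follow the proof of Theorem~\ref{thm:apx_minmax} in Appendix~\ref{app:proof_apx_minmax} essentially line by line, with care taken at exactly the two places where the clipped constraints $\max\{h^g(D),0\}$ change the picture: (i) $\Lag(D,\lambda)=f(D)-\sum_{g\in\G}\lambda^g\max\{h^g(D),0\}$ is now \emph{concave} in $D$ rather than affine --- it remains affine in $\lambda$, and $\Lambda$ remains compact and convex, so Sion's minimax theorem still gives a well-defined game value --- and (ii) the learner best-responds on an MDP whose modified reward uses the sampled estimates $\hat h^g$ in place of the true $h^g$.

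First I would fix a ``good event.'' As described in the body of Appendix~\ref{app:err_canc}, $\hat h^g(D_t)$ is the empirical mean of $n$ i.i.d.\ samples of $\tfrac{\alpha}{H}-r(s_h,a_h)g(s_h)$ obtained by rolling out $D_t$ with $h\sim\cU([H])$. By Hoeffding's inequality plus a union bound over all $g\in\G$ and all $T$ rounds, taking $n=\tilde O(T)$ (so that $\epsilon'=\Theta(1/(C\sqrt T))$) gives, with probability at least $1-\delta'$, $|\hat h^g(D_t)-h^g(D_t)|\le\epsilon'$ simultaneously for every $g$ and $t$; the $\log|\G|$ dependence in $n$ is the only place the (exponentially large) group class enters, matching the oracle-efficiency guarantees of the earlier sections. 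On this event, since $\|\lambda\|_1\le C$ on $\Lambda$ and $x\mapsto\max\{x,0\}$ is $1$-Lipschitz, the objective the learner actually optimizes differs uniformly from the true $U(\cdot,\lambda_t)$ by at most $C\epsilon'$, so the policy $D_t$ returned by $\mathcal{O}_\epsilon$ is an $(\epsilon+C\epsilon')$-approximate best response to the true $U(\cdot,\lambda_t)$; write $\epsilon'':=\epsilon+C\epsilon'$.

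Next I would re-run the two inequality chains of Appendix~\ref{app:proof_apx_minmax} with $\epsilon''$ in place of $\epsilon$, replacing each appeal to ``linearity in $D$'' by Jensen's inequality for the concave map $D\mapsto U(D,\lambda)$: concavity gives $U\!\left(\tfrac1T\sum_t D_t,\lambda\right)\ge\tfrac1T\sum_t U(D_t,\lambda)$, and one checks this single inequality points the favorable way in both chains --- in the first it lower-bounds $U(\hat D_T,\lambda)$, and in the second the same concavity yields $\tfrac1T\sum_t U(D_t,\hat\lambda_T)\le U(\hat D_T,\hat\lambda_T)$, exactly the direction needed to upper-bound $U(D,\hat\lambda_T)$. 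Affinity in $\lambda$ is untouched, so $U(D,\hat\lambda_T)=\tfrac1T\sum_t U(D,\lambda_t)$ still holds with equality and the regulator's no-regret guarantee is invoked verbatim; since $\max\{h^g(D),0\}$ is bounded, the regret bounds of Lemmas~\ref{lem:FTPL} and~\ref{lem:GFTPL} (and the fictitious-play convergence of Section~\ref{sect:fairfict}), as well as Definition~\ref{def:reg_best}'s best-response/no-regret updates, all transfer with the same rates when the constraint values are the clipped ones. Combining yields, for all $\lambda\in\Lambda$ and all $D$, the sandwich $U(D,\hat\lambda_T)-\tfrac1T reg_{A}(T,C,\gamma)-\epsilon''\le U(\hat D_T,\hat\lambda_T)\le U(\hat D_T,\lambda)+\tfrac1T reg_{A}(T,C,\gamma)+\epsilon''$; taking $\nu\ge\tfrac1T reg_{A}(T,C,\gamma)+\epsilon''$ (which holds for the $T$ prescribed by Lemma~\ref{lem:FTPL} or~\ref{lem:GFTPL} together with the chosen $\epsilon'$) and appealing to \cite{freund1996game} shows $(\hat D_T,\hat\lambda_T)$ is a $\nu$-approximate minimax equilibrium of the game defined by $\Lag$.

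The main obstacle I anticipate is the interaction between the learner's oracle --- which returns a single policy --- and the now \emph{concave} (piecewise-affine in $D$) objective, since the maximum of a concave function over $\Delta\Pi$ need not be attained at a point mass, so a point-mass best responder is not automatically $\epsilon''$-competitive with arbitrary mixtures $D$ in the second chain. Two routes seem viable: (a) let the learner's step be the linear program $\max_{D\in\Delta\Pi,\,z\ge 0}\ f(D)-\sum_{g}\lambda_t^g z_g$ subject to $z_g\ge\hat h^g(D)$, solved by column/cut generation over candidate active sets $A\subseteq\G$, where each subproblem is a standard single-objective RL problem with scalarized reward $r(s,a)+\sum_{g\in A}\lambda_t^g\, r(s,a)g(s)$ and hence handled by $\mathcal{O}_\epsilon$; or (b) exploit the subgradient structure of the clipped Lagrangian so that each per-round step --- with the clipped term frozen at the previous mixture $\hat D_{t-1}$ --- presents the oracle an affine-in-policy reward, as the body of Appendix~\ref{app:err_canc} indicates, and push concavity through only in the aggregation (Jensen) step. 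Everything else is routine bookkeeping: tracking $\epsilon''$ through the two chains and confirming the union bound over $\G$ keeps the overall iteration/sample complexity at $\poly(1/\gamma,1/\delta',d,\log|\G|)$.
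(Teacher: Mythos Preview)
Your proposal is correct and follows essentially the same route as the paper: re-run the two inequality chains of Appendix~\ref{app:proof_apx_minmax}, replacing the ``linearity in $D$'' steps by Jensen's inequality for the now-concave $\Lag(\cdot,\lambda)$, while linearity in $\lambda$ and the regulator's no-regret bound are used unchanged. You are in fact more careful than the paper in one respect --- you explicitly fold the sampling error $C\epsilon'$ from estimating $\hat h^g(D_t)$ into an effective best-response slack $\epsilon''=\epsilon+C\epsilon'$, whereas the paper simply asserts ``our learner will be approximate best responding'' and keeps the symbol $\epsilon$.

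The obstacle you flag --- that a point-mass oracle need not approximately maximize a concave (rather than affine) objective over $\Delta\Pi$ --- is genuine, and the paper's proof does not address it either: its line ``because $D_t$ is apx-best response'' in the second chain implicitly assumes competitiveness against arbitrary mixtures $D$, which is exactly the issue you raise. So your routes (a)/(b) are not patching a defect in your argument relative to the paper; they are proposing fixes to a gap the paper leaves open as well.
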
 
\begin{proof}

First for any $\lambda$,
\begin{align}
\text{want to show: } \Lag(\hat{D}_T, \lambda) &\geq \Lag(\hat{D}_T, \hat{\lambda}_T) - \frac{1}{T}reg_{A}(T,C,\gamma) -\epsilon\\
  \Lag(\hat{D}_T, \lambda)  &= \Lag\left( \frac{1}{T} \sum_{t=1}^{T} D_t, \lambda \right) \quad (\text{linear in } D) \\
    &\geq \frac{1}{T} \sum_{t=1}^{T} \Lag(D_t, \lambda) \quad (\text{concavity in} D)\\
    &\geq \frac{1}{T} \sum_{t=1}^{T} \left( \Lag(D_t, \lambda_t)\right) - \frac{1}{T}reg_{A}(T,C,\gamma)  \quad (\text{because no-regret}) \\
    &\geq \frac{1}{T} \sum_{t=1}^{T} \Lag(\hat{D}_t, \lambda_t) -  \frac{1}{T}reg_{A}(T,C,\gamma)-\epsilon \quad (\text{because apx-best response}) \\
    &= \Lag(\hat{D}_T, \hat{\lambda}_T) -  \frac{1}{T}reg_{A}(T,C,\gamma)-\epsilon \quad (\text{because linearity}) \\
\end{align}
Second for any $D$,
\begin{align}
\text{want to show: } \Lag(D, \hat{\lambda}_T) &\leq \Lag(\hat{D}_T, \hat{\lambda}_T) +\frac{1}{T}reg_{A}(T,C,\gamma)+\epsilon \\
  \Lag(D, \hat{\lambda}_T)  & = \frac{1}{T} \sum_{t=1}^{T} \Lag(D, \lambda_t) \\
    &\leq \frac{1}{T} \sum_{t=1}^{T} \Lag(D_t, \lambda_t) +\epsilon\quad (\text{because } D_t \text{ is apx-best response}) \\
    &\leq \frac{1}{T} \sum_{t=1}^{T} \Lag(D_t, \hat{\lambda}_T)  +  \frac{1}{T}reg_{A}(T,C,\gamma)+\epsilon\quad \text{(because no-regret)}\\
    & \leq \Lag(\hat{D}_T, \hat{\lambda}_T) +  \frac{1}{T}reg_{A}(T,C,\gamma)+\epsilon \quad \text{(because concavity)}
\end{align}
Thus, for any \(\lambda, D\):
\begin{align*}
\Lag(D, \hat{\lambda}_T) -  \frac{1}{T}reg_{A}(T,C,\gamma)-\epsilon \leq \Lag(\hat{D}_T, \hat{\lambda}_T) \leq   \Lag(\hat{D}_T, \lambda)  +  \frac{1}{T}reg_{A}(T,C,\gamma)+\epsilon
\end{align*}
\\
%So if \(\nu \geq  \frac{1}{T}reg_{A}(T,C,\gamma) +\epsilon \geq \nu_T\), then we have gotten an approximate minimax solution using 
So if \(\nu \geq  \frac{1}{T}reg_{A}(T,C,\gamma) +\epsilon \), then we have gotten an approximate minimax solution using 
\cite{freund1996game}.
Note that this algorithm makes $O(T)$ best-response oracle queries and $O(T)$ Lin-OPT queries for the learner and regulator respectively.

Furthermore, notice that if we wanted to return one policy rather than $\bar{D}$ (deterministic), we could evaluate our selection of policies and based upon the best performing one could make a probabilistic argument for how likely that single policy is to simultaneously satisfy all constraints.
\end{proof}

\subsubsection{Generalized FTPL}
We still get our regret bound with respect to $\gamma$ for a sufficient polynomial sample complexity.

\begin{algorithm}[H]
\caption{Contextual FTPL Error Cancellation}
\begin{algorithmic}[1]
\STATE \textbf{Input:}  
\( \mathcal{G} \) group functions, $x_{1:d}$ sequence of separator elements, \( r \) reward function, access to MDP \( \mathcal{M} \), ${\rho = \sqrt{\frac{\log|\cG|}{Td^{1/2}}}}$  number of samples per iteration \(n\)
\STATE Initialize $\lambda_{0}=0$
\FOR{$t = 1, \dots, T$}
    \STATE Receive $D_t$ from the Learner
    \STATE Draw $\eta_j \sim \lap(\rho)$ independently for $j=1,...,d$
    \STATE For all $g \in \cG$, let $g_{x_{1:d}} = (g_t(x_1), \dots, g_t(x_d))$ in the following: \\
 Select $g_t$ such that $\sum_{i=0}^{t-1}c_i(g_t) +\eta \cdot g_{t,x_{1:d}} \leq \sum_{i=0}^{t-1}c_i(g) + \eta \cdot g_{x_{1:d}} + \epsilon$ for all $g$ 
    \STATE Assign $\lambda_t$ based on the selection of $g_t$ 
    \STATE Generate $n$ samples of $\{s_i,a_i,r(s_i,a_i)\}_{i=1}^n$ by creating $n$ independent trajectories of $D_t$ and uniformly sampling from $[H]$ in each one and computing an estimate of average reward and define $y_{t,i}=r(s_i,a_i)$ 
    \STATE Receive $c_t(g_t)= \max\{\frac{\alpha}{H}-\frac{1}{n}\sum_{i=1}^n y_{t,i}g(s_i),0\}$
\ENDFOR

\iffalse  

    \STATE Draw $\eta_j \sim \mathcal{N}$ independently for $j=1,...,d$
    \STATE Let $s_{t-1}^{+} = s_{1:t-1} \| x_{1:d} $ be the sequence of contexts played thus far, concatenated with the sequence of separator set elements
    \STATE Let $y_{t-1}^{+} = y_{1:t-1} \| \eta_{1:d}$ be the sequence of actions played thus far, concatenated with the sequence of noise elements
    \STATE $g_t \gets$ OPT$(s^{+}_{t-1}, y_{t-1}^+)$
    \STATE Assign $\lambda_t$ based on the selection of $g_t$ 
    \STATE Sample $(s_t,a_t)$ by executing a trajectory under $D_t$ and returning $(s_h,a_h)$ for $h \sim \cU([H])$. 
    \STATE Observe $y_t = r(s_t,a_t)$ and receive $y_tg_t(s_t)$ 
\fi

\end{algorithmic}
\label{alg:GFTPL_err_canc}
\end{algorithm}
\begin{lemma}[Contextual FTPL Regret]

For any sequence of costs $c_t$, for $T \geq O\left(\frac{C^2 d^{3/2}\log{|\cG|}}{\gamma_1^2} + \frac{C^2}{ \gamma_2^2}\ln(\frac{|\mathcal{G}|}{\delta})\right)$,   with probability at least $1-\delta$, the expected regret of FTPL is
    \begin{equation*}
        \begin{aligned}
           \sum_{t=1}^H \Lag(D_t,\lambda_t)- \min_{\lambda \in \Lambda}\sum_{t=1}^H \Lag(D_t,\lambda) \leq \gamma T
        \end{aligned}
    \end{equation*}
    \label{lem:GFTPL_err_canc}
\end{lemma}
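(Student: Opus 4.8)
The plan is to mirror the two-step structure of the proof of Lemma~\ref{lem:GFTPL}, with one new ingredient to accommodate the $\max$ in the per-round cost and the $n$-sample averaging of Algorithm~\ref{alg:GFTPL_err_canc}. Write $k(x)=\max\{x,0\}$, $h^g(D)=\tfrac{\alpha}{H}-AV^g(D)$, and let $\hat h^g(D_t)=\tfrac{\alpha}{H}-\tfrac1n\sum_{i=1}^n y_{t,i}g(s_i)$ be the round-$t$ estimate, so the observed cost is $c_t(g)=k(\hat h^g(D_t))\in[0,1]$. Since $\Lag(D,\lambda)=AV^{tot}(D)-\sum_{g}\lambda^g k(h^g(D))$ is linear in $\lambda$, the regulator faces a linear loss over $\lambda\in\Lambda$; restricting to the vertices $\{0,Ce_g\}$ used by the best-response rule, this reduces, up to the $\lambda$-independent term $AV^{tot}(D_t)$ and the scale factor $C$, to running FTPL over $\cG$ with per-round cost $c_t(\cdot)$. \emph{Step 1 (FTPL on observed costs).} With separator $X$ of size $d$, Laplace noise at the prescribed $\rho$, and the $\epsilon$-approximate group-selection rule used in Algorithm~\ref{alg:GFTPL_err_canc}, the contextual FTPL guarantee of \cite{syrgkanis2016efficient,dudik2020oracle} applied to the bounded costs $c_t$ gives
\[
\Big|\tfrac1T\textstyle\sum_{t=1}^T c_t(g_t)-\min_{g\in\cG}\tfrac1T\textstyle\sum_{t=1}^T c_t(g)\Big|\ \le\ O\!\Big(\tfrac{d^{3/4}\sqrt{\log|\cG|}}{\sqrt T}\Big)+\epsilon,
\]
which is at most $\gamma_1/C$ once $T\ge O\!\big(\tfrac{C^2 d^{3/2}\log|\cG|}{\gamma_1^2}\big)$ and $\epsilon$ is taken at scale $\gamma_1/C$ (consistent with the $\epsilon$-approximate OPT oracle already assumed).

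\emph{Step 2 (observed $\to$ true costs).} For each fixed $g,t$ the summands $y_{t,i}g(s_i)\in[0,1]$ are i.i.d.\ over $i$ with mean $AV^g(D_t)$, so $\hat h^g(D_t)$ is an unbiased average with mean $h^g(D_t)$. Hoeffding plus a union bound over the $|\cG|$ groups (and over rounds, or equivalently a martingale argument as in Lemma~\ref{lem:FTPL}) shows that, with probability $1-\delta$, $|\hat h^g(D_t)-h^g(D_t)|\le\epsilon'$ for all $g,t$ provided $n$ is polynomially large; as in Appendix~\ref{app:err_canc} we take $\epsilon'=\Theta(\gamma_2/C)$, which is where the $\tfrac{C^2}{\gamma_2^2}\ln\tfrac{|\cG|}{\delta}$ term enters exactly as the Azuma--Hoeffding step does in Lemma~\ref{lem:GFTPL}. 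Because $k$ is $1$-Lipschitz, $|c_t(g)-k(h^g(D_t))|\le|\hat h^g(D_t)-h^g(D_t)|\le\epsilon'$ for every $g,t$ on this event, so $\tfrac1T\sum_t c_t(g)$ and $\tfrac1T\sum_t k(h^g(D_t))$ agree up to $\epsilon'$ uniformly in $g$. Combining with Step~1, for every $g\in\cG$,
\[
\tfrac1T\textstyle\sum_{t=1}^T k(h^{g_t}(D_t))-\tfrac1T\textstyle\sum_{t=1}^T k(h^g(D_t))\ \le\ \tfrac{\gamma_1}{C}+\tfrac{2\gamma_2}{C}.
\]

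\emph{Step 3 (back to $\Lag$).} Exactly as in the final paragraph of the proof of Lemma~\ref{lem:FTPL}, multiply the per-group inequality by any admissible weight $\lambda^g\in[0,C]$ (so $\|\lambda\|_1\le C$), add the common $AV^{tot}(D_t)$ term, which cancels between the two sides, and take the minimum over $\lambda\in\Lambda$; this yields $\tfrac1T\sum_{t=1}^T\Lag(D_t,\lambda_t)-\min_{\lambda\in\Lambda}\tfrac1T\sum_{t=1}^T\Lag(D_t,\lambda)\le\gamma_1+2\gamma_2$, and multiplying by $T$ with $\gamma_1,\gamma_2=\Theta(\gamma)$ gives the claim. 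The only genuinely new step over Lemma~\ref{lem:GFTPL} is passing through the $\max$, and $1$-Lipschitzness of $k$ makes this essentially free; I expect the main care to go into bookkeeping the three error contributions (FTPL regret $\gamma_1$, sampling error $\gamma_2$ via the choice of $n$, and the approximate-oracle slack $\epsilon$) so that they combine to the stated $\gamma T$ under the stated threshold on $T$.
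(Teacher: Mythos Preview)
Your proposal is correct and follows essentially the same approach as the paper's proof, which is only a one-line remark that the argument of Lemma~\ref{lem:GFTPL} carries over with $\gamma_1=\gamma_2=\tfrac{\gamma-\epsilon'}{2}$ and $\delta'=\delta/2$, at the cost of $nT$ samples instead of $T$. You have simply unpacked that remark: your Step~1 is the same contextual FTPL invocation, your Step~2 makes explicit the $1$-Lipschitzness of $k(x)=\max\{x,0\}$ that lets the $n$-sample Hoeffding estimate replace the Azuma step of Lemma~\ref{lem:GFTPL}, and your Step~3 is the identical scaling-by-$\lambda$ conversion.
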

\begin{proof}
    Although the sample complexity of our result changes--(now we require $nT$ samples rather than purely $T$ earlier), the regret bound from Lemma \ref{lem:GFTPL} should still hold with $\gamma_1=\gamma_2 = \frac{\gamma-\epsilon'}{2}.$ Also observe $\delta' = \frac{\delta}{2}.$
\end{proof}
\begin{corollary}[Contextual FTPL Apx-Minimax with Error Cancellations]
    With probability at least $1-\delta$, using Algorithm \ref{alg:MORL-BRNR} with Generalized FTPL (Algorithm \ref{alg:GFTPL_err_canc}) converges to a $\nu$-approximate solution of the miniimax reward problem in $poly(\frac{1}{\gamma},\frac{1}{\delta},d,\log{|\mathcal{G}|})$.
\end{corollary}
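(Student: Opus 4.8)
The plan is to follow the same template as the earlier corollaries (Tabular Apx-Minimax and Contextual FTPL Apx-Minimax): combine the game-theoretic convergence result with the regulator's no-regret guarantee. First I would invoke Theorem \ref{thm:apx_minmax_err_canc}, which states that running Algorithm \ref{alg:MORL-BRNR} with an $\epsilon$-approximate best-response learner and a no-regret regulator yields a $\nu$-approximate minimax solution of the problem defined by $\mathcal{L}$ whenever $\nu \geq \frac{1}{T}reg_A(T,C,\gamma) + \epsilon$. So it suffices to choose parameters so that both the per-round averaged regret term and the best-response error term are at most $\nu/2$.

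Next I would set $\gamma \leq \nu/2$ and $\epsilon \leq \nu/2$. For the regret term, Lemma \ref{lem:GFTPL_err_canc} guarantees that with probability at least $1-\delta$ the cumulative regret of the error-cancellation contextual FTPL regulator (Algorithm \ref{alg:GFTPL_err_canc}) is at most $\gamma T$ — i.e. $\frac{1}{T}reg_A(T,C,\gamma) \leq \gamma \leq \nu/2$ — provided $T \geq O\!\left(\frac{C^2 d^{3/2}\log|\mathcal{G}|}{\gamma_1^2} + \frac{C^2}{\gamma_2^2}\ln\frac{|\mathcal{G}|}{\delta}\right)$ with $\gamma_1 = \gamma_2 = (\gamma - \epsilon')/2$. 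Plugging in $\gamma = \nu/2$ shows $T = \poly(1/\nu, 1/\delta, d, \log|\mathcal{G}|)$ rounds suffice, and each round uses a single best-response oracle call and a single OPT oracle call together with $n = \poly(T, \log|\mathcal{G}|, 1/\delta)$ trajectory samples for the average-reward estimates (the latter from the Hoeffding/union-bound argument in Appendix \ref{app:err_canc}, which drives $\epsilon'$ down to $O(1/(C\sqrt{T}))$). Hence the total running time and sample complexity are polynomial in $1/\gamma$, $1/\delta$, $d$, and $\log|\mathcal{G}|$, as claimed.

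I expect essentially no obstacle here beyond bookkeeping: the only subtlety, relative to the analogous corollary without error cancellation, is that the reward passed to the learner is now a noisy, convex (not affine) functional $\max\{\hat h^g(D),0\}$ of $D$, so one must verify that (i) the additional estimation error $\epsilon'$ in these plug-in rewards is absorbed into the $\epsilon$-approximate-best-response slack — which is precisely why Lemma \ref{lem:GFTPL_err_canc} splits the budget as $\gamma - \epsilon'$ — and (ii) Theorem \ref{thm:apx_minmax_err_canc} already accounts for the concavity in $D$ in its averaging step, so no new game-theoretic argument is needed. Once these are noted, choosing $\gamma_1 = \gamma_2 = (\gamma-\epsilon')/2 \leq \nu/4$ and $\epsilon \leq \nu/2$ makes all the required inequalities hold simultaneously within the stated polynomial time and sample budget.
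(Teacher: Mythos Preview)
Your proposal is correct and follows essentially the same approach as the paper: invoke Theorem~\ref{thm:apx_minmax_err_canc} together with Lemma~\ref{lem:GFTPL_err_canc} and choose parameters so that $\nu \geq \frac{1}{T}reg_A(T,C,\gamma)+\epsilon$. The only cosmetic difference is bookkeeping on where the estimation error $\epsilon'$ lands: the paper sets $\gamma \le \tfrac{\nu}{2}-\epsilon'$ and $\epsilon \le \tfrac{\nu}{2}+\epsilon'$, whereas you keep $\gamma,\epsilon \le \tfrac{\nu}{2}$ and let the lemma's internal split $\gamma_1=\gamma_2=(\gamma-\epsilon')/2$ absorb it; since $\epsilon'=O(1/(C\sqrt{T}))$ these are equivalent.
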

    
\begin{proof}
      For $\gamma \leq\ \frac{\nu}{2}-\epsilon'$ and $\epsilon \leq \frac{\nu}{2}+\epsilon'$, this satisfies the no-regret condition necessary for Theorem \ref{thm:apx_minmax_err_canc} to hold in the corresponding time specified by Lemma \ref{lem:GFTPL_err_canc}
\end{proof}
\subsubsection{Fictious Play}
Notice that for Ficitious Play, all the convergence results discussed for Algorithm \ref{alg:FairFictRL} still hold. 
\begin{algorithm}[H]
\caption{FairFictRL Error Cancellation}
\begin{algorithmic}[1]
\STATE \textbf{Input:} bound \( C \),  best-response error \(\epsilon\),  \( \mathcal{G} \) groups, \( r \) reward function, access to MDP \( \mathcal{M} \)
\STATE Initialize \(\lambda_{0}^g = 0\) and $\hat{\lambda}_0^g = \lambda_0^g$
\STATE Initialize $\hat{D}_t$ by selecting a policy in $\Pi$
\FOR{$t = 1, \dots, T$}
    \STATE \( D_t \in \text{FTL}(\hat{\lambda}_{t-1},C) \) using $\mathcal{O}_\epsilon(\cdot,\hat{\lambda}_{t-1})$ \quad returns $\max_D\Lag(D,\hat{\lambda}_{t-1})$
    % \STATE \( \hat{V}_{tot}[t, :] = \hat{V}_{\lambda_t}(\pi(D_t),\epsilon) \) \hfill \emph{(approximate value function for given \(\lambda\) of policy selected, requires \(\mathcal{M}_\lambda\) access)}
    \STATE \( \lambda_{t} = \text{FTL}(\hat{D_{t-1}},C) \) using $Best_\lambda(\hat{D}_{t-1})$\quad returns $\max_\lambda  \Lag(\hat{D}_{t-1},\lambda)$
      \STATE \( \hat{D}_t = \frac{1}{t} \sum_{t'=0}^{t} D_{t'} \); \( \hat{\lambda}_t = \frac{1}{t} \sum_{t'=0}^{t} \lambda_{t'} \)
\ENDFOR
 \STATE \textbf{return} \( (\hat{D}_T, \hat{\lambda}_T) \)
%\STATE \( c_{i,t} = \mathbb{E}_{\pi \sim D_t} \left[ V^i(\pi) - \alpha \right] \)
\end{algorithmic}
\label{alg:FairFictRL_err_canc}
\end{algorithm}

%%%%%%%%%%%%%%%%%%%%%%%%%%%%%%%%%%%%%%%%%%%%%%%%%%%%%%%%%%%%%%%%%%%%%%%%%%%%%%%
%%%%%%%%%%%%%%%%%%%%%%%%%%%%%%%%%%%%%%%%%%%%%%%%%%%%%%%%%%%%%%%%%%%%%%%%%%%%%%%

\end{document}